\newcommand{\norm}[1]{\left\lVert#1\right\rVert}
\newcommand{\policy}{\pi}
\newcommand{\jointState}{s}
\newcommand{\jointStateSpace}{\mathcal{S}}
\newcommand{\jointAction}{a}
\newcommand{\jointActionSpace}{\mathcal{A}}
\newcommand{\idist}{\alpha}
\newcommand{\observedTraj}{\tau}
\newcommand{\unknownPara}{\theta}
\newcommand{\featureVec}{\phi}
\newcommand{\expectedStateVisit}{d}
\newcommand{\estReward}{\Tilde{r}}
\newcommand{\totalReward}{R}
\newcommand{\estTotalReward}{\Tilde{\totalReward}}
\newtheorem{oproblem}{Optimization Problem}
\newtheorem{definition}{Definition}
\newtheorem{theorem}{Theorem}
\newtheorem{proposition}{Proposition}
\newtheorem{remark}{Remark}
\newtheorem{lemma}{Lemma}
\newtheorem{problem}{Problem}
\tikzset{node distance=2.5cm, % Minimum distance between two nodes. Change if necessary.
every state/.style={ % Sets the properties for each state
semithick,
fill=gray!10},
initial text={}, % No label on start arrow
double distance=2pt, % Adjust appearance of accept states
every edge/.style={ % Sets the properties for each transition
draw,
->,>=stealth', % Makes edges directed with bold arrowheads
auto,
semithick}}
\pgfplotsset{compat=newest}
\newlength\figH
\newlength\figW
\title{Deceptive Sequential Decision-Making via Regularized Policy Optimization}
\author{Yerin Kim$^{1}$, Alexander Benvenuti$^{1}$, Bo Chen$^{1}$, Mustafa Karabag$^{2}$, Abhishek Kulkarni$^{2}$, Nathaniel D. Bastian$^{3}$, Ufuk Topcu$^{2}$, Matthew Hale$^{1}$
\thanks{
$^{1}$Georgia Institute of Technology, Atlanta, GA, USA.
Emails: \texttt{\{yerinkim, abenvenuti3, bchen351, matthale\}@gatech.edu}.
}
\thanks{
$^{2}$University of Texas at Austin, Austin, TX, USA.
Emails: \texttt{\{karabag, abhishek.kulkarni, utopcu\}@utexas.edu}.
}
\thanks{
$^{2}$United States Military Academy, West Point, NY, USA.
Emails: \texttt{\{nathaniel.bastian\}@westpoint.edu}.
}
}
\date{August 2025}
\begin{document}

\maketitle

\begin{abstract}
Autonomous systems are increasingly expected to operate in the presence of adversaries, though adversaries may infer sensitive information simply by observing a system. Therefore, present a deceptive sequential decision-making framework that not only conceals sensitive information, but actively misleads adversaries about it. We model autonomous systems as Markov decision processes, with adversaries using inverse reinforcement learning to recover reward functions. To counter them, we present three regularization strategies for policy synthesis problems that actively deceive an adversary about a system's reward.
``Diversionary deception'' leads an adversary to draw any false conclusion about the system's reward function. ``Targeted deception'' leads an adversary to draw a specific false conclusion about the system's reward function. ``Equivocal deception'' leads an adversary to infer that 
the real reward and a false reward both explain the system's behavior. 
We show how each form of deception can be implemented in policy optimization problems and analytically bound the loss in total accumulated reward induced by deception. 
Next, we evaluate these developments in a multi-agent setting. We show that diversionary, targeted, and equivocal deception all steer the adversary to false beliefs while still attaining a total accumulated reward 
that is at least~$98\%$ of its optimal, non-deceptive value. 
% \Yerin{Current word count for the abstract is 199. (200 is the limit) Current total word count is 8,992(excluding Appendix). (The limit is 10,000.)} 
\end{abstract}

\begin{keywords}
    Deception, Markov Decision Processes, Inverse Reinforcement Learning
\end{keywords}

\section{Introduction}\label{sec:Intro}

Autonomous systems are used in various forms of vital infrastructure, including manufacturing systems~\cite{monostori2016cyber,arkin1990autonomous}, autonomous vehicles~\cite{chen2017cyber,guo2022cyber}, and smart power grids~\cite{yu2016smart}. 
One challenge in using these systems is that they can be observed by external parties
that may seek to uncover and exploit sensitive information~\cite{qayyum2020securing,SHEEHAN2019523}. 
For example, an adversary may observe the daily routines of an individual's
autonomous vehicle, which can give insight into when that individual
is home or away, who their close associates are, and other sensitive information. 

In military or cybersecurity contexts where strict confidentiality is required, such observations can result in severe operational loss~\cite{10.1145/989.991}.
For example, if an adversary infers the underlying objective in troop movements or resource allocation, then they might exploit this knowledge to make targeted attacks~\cite{mhara2024cyber}.
Moreover, recurring patterns on supply channels or communication frequencies can unintentionally disclose strategic priorities~\cite{9152764}.
Unfortunately, such observations are often unavoidable,
in the sense that a user cannot stop someone else from physically observing them.
Thus, another approach is needed to 
reduce the leakage of sensitive information.

%As observations are unavoidable in physical systems, it is essential to mitigate the associated risks.

%However, the physical nature of these systems allows them to be directly observed. 

%The observer might capture sensitive information that should be protected.

%While traditional privacy settings may aim to induce uncertainty \cite{benvenuti2023differentially}, in many applications, we need to actively deceive adversaries by making them confidently wrong.

One way of mitigating risk is by adopting privacy to protect sensitive data. Traditional privacy 
implementations often induce uncertainty, thereby giving adversaries ambiguous information~\cite{benvenuti2023differentially,9304015}.
However, adversaries may still glean sensitive information despite the implementation of privacy protections~\cite{9833672,9519418}, and simply inducing uncertainty may not be sufficient 
to deter an adversary or mitigate the effects of their efforts. 
For example, an adversary may still predict specific times that a user is not at home, even
if the user's exact departure and arrival times are uncertain. 
As a result, interest has arisen in the development of deceptive techniques that drive observers to draw incorrect conclusions about the systems that they observe~\cite{karabag2021deception,mceneaney2005deception,lv2024optimal}.

In this paper, we develop and analyze three forms of deception for sequential decision systems.
We model such systems as Markov decision processes (MDPs), and we seek to deceive an adversary about an MDP's reward function, which encodes its objectives and intentions.
The adversary is modeled as using inverse reinforcement learning (IRL) to attempt to infer
objectives, and we introduce three types of deception to counter these efforts. 
The first is ``diversionary'' deception, which
seeks to make an adversary draw any incorrect conclusion about a system's reward.
The second is ``targeted'' deception, 
which seeks to make an adversary draw a particular incorrect
conclusion about a system's reward.
The third is ``equivocal'' deception, which seeks to make an adversary draw a false conclusion about the reward function that introduces ambiguity between two different objectives.
For each, we formulate tractable optimization problems for
policy synthesis that include regularizer terms that implement 
each type of deception. 

In detail, our contributions are: 
%\Alex{Make this a bulleted list (not numbered), specifically itemize instead of enumerate.} 
\begin{itemize}
    % \item We introduce ``diversionary deception'' and formulate a family of optimization
    % problems for synthesizing decision policies that implement it (Definition~\ref{def:diversionary_deception}
    % and Optimization Problem~\ref{op:diversionary_optimization_problem}).
    % \item We introduce ``targeted deception'' and formulate a family of optimization
    % problems for synthesizing decision policies that implement it (Definition~\ref{def:targeted_deception}
    % and Optimization Problem~\ref{op:targeted_optimization_problem}).
    % \item We introduce ``equivocal deception'' and formulate a family of optimization
    % problems for synthesizing decision policies that implement it (Definition~\ref{def:equivocal_deception}
    % and Optimization Problem~\ref{op:modified_equivocal_optimization_problem}).
    \item We introduce ``diversionary deception'', ``targeted deception'' and  ``equivocal deception'' and formulate a family of optimization
    problems for synthesizing decision policies that implement each one
    (Definitions~\ref{def:diversionary_deception}, \ref{def:targeted_deception}, and~\ref{def:equivocal_deception}
    and Optimization Problems~\ref{op:diversionary_optimization_problem}, \ref{op:targeted_optimization_problem}, and~\ref{op:modified_equivocal_optimization_problem}).
    
    \item We analytically bound the loss in total accumulated reward that is caused by behaving deceptively 
    in terms of a user-specified parameter that is used to implement
    each type of deception
    (Theorems~\ref{thm:bound_div}, ~\ref{thm:bound_tar} and~\ref{thm:bound_equ}).
    %\item We validated this finding with a much larger, and more complex simulation %\Alex{larger than what? Maybe we write this as ``
    \item We validate the deceptiveness and performance of our method in numerical simulations of a network defense problem, and we empirically
    assess deception/performance tradeoffs
    (Section~\ref{sec:result}).
\end{itemize}

Existing work in~\cite{kim2024defining} has presented related definitions of deception and
empirical results for them, but to the best of our knowledge the current paper is the first
to analytically characterize these types of deception and the tradeoffs they create. 
In addition, this work expands the empirical results by exploring deception and performance under different algorithms of inverse reinforcement learning, ensuring their reliability in applications.

\subsection{Related Works}
%\Alex{Talk about my paper and Mustafa's and all of Ufuk's papers.}

%\textcolor{blue}{protect sensitive data while still enabling system operation.}

%\textcolor{blue}{The use of deceptive strategies is important for an agent that attempts not to reveal his intentions in an adversarial environment.}

Privacy has been widely studied for protecting sensitive information of 
decision systems at runtime.
For example, the work in~\cite{yazdani2022differentially,hawkins2020differentially} adopted differential privacy to protect agents' state trajectories in multi-agent systems.
By using differential privacy, %\Alex{Similarly here, the paper didn't do it, the authors writing the paper did. If the citation style specifically mentions the authors name in the text, I like this style, otherwise try to refer to these citations as "the authors in" or "the work in".}
the authors in~\cite{benvenuti2023differentially} privatize reward functions in policy synthesis for multi-agent MDPs, while those in~\cite{chen2023differentialsymbolic,chen2023differential} ensure protection of non-numeric or symbolic data.
An entropy-based approach has been considered as another method of protecting information. In~\cite{karabag2019least} and~\cite{savas2019entropy}, the information leaked to the observer is minimized to reduce the observer's ability to infer 
certain characteristics of agents, such as their transition probabilities. 

As described above, while privacy can induce uncertainty in an observer, 
a need has arisen for deceptive decision strategies that deliberately
induce false beliefs in an observer.  For example,
the authors in~\cite{abdulhai2024defining} defined deception in 
a partially observable
speaker-listener problem, and deception is formalized as the regret
that the listener incurs by listening to the speaker. 
Deceptive strategies
of agents are explored in~\cite{karabag2021deception} and~\cite{karabag2022exploiting} in environments in which a supervisor provides a reference policy for an agent to follow,
and the agent attempts to follow a different policy while giving
the appearance of following the policy that was specified by the supervisor.

The current paper differs from privacy-based approaches because our focus is 
on deliberately inducing incorrect inferences, rather than generating uncertainty.
In addition, this work is different from~\cite{karabag2021deception} and~\cite{karabag2022exploiting} because we consider a system operating under observation of adversaries rather than a supervisor. Thus, no reference policy is given
to the agents being observed in this work, and the observer has no baseline
expectations for the agents' behavior. We differ from~\cite{abdulhai2024defining}
because we define deception in terms of an observer's beliefs, rather than
their reward (and our definition does not even require the observer to have
a reward). 

The rest of this paper is organized as follows. Section~\ref{sec:prelims} presents preliminaries on MDPs and IRL followed by formal problem statements.  
Then Section~\ref{sec:method} 
defines optimization problems that implement deception and 
derives performance guarantees for 
the resulting deceptive decision
policies. 
Then
Section~\ref{sec:result} validates these finding with numerical simulations, 
and Section \ref{sec:conclude} concludes.

\textbf{Notation} We use~$\mathbb{R}$ to denote the real numbers, and
we use~$\Delta(S)$ to denote the set of probability distributions
over a finite set~$S$. 

% \subsection{Notation}
% For a finite set $B$, the notation $\Delta(B)$ is used to represent the set of probability distributions over $B$. \Alex{Do we have more notation? If so, introduce it here, if not, perhaps we just have this appear in the body.}

\section{Preliminaries and Problem Statements} \label{sec:prelims}

In this section, we introduce preliminaries for Markov decision processes and inverse reinforcement learning, 
then define deception, and lastly give formal problem statements.

\subsection{Preliminaries on Markov Decision Processes}\label{subsec:MDP}

We consider a collection of $M$ agents indexed over $i\in\{1,\dots,M\}$ and 
modeled as Markov decision processes.

%\Alex{Maybe just a short blurb here similar to my private rewards paper, something like ``Consider a collection of N agents indexed over~$i\in[N]$. We model agent~$i$ as a Markov decision process."}
%\Alex{Do we need~$\gamma^i$ here? We never use it.}
\begin{definition}[Markov Decision Process]
A Markov decision process (MDP) $\mathcal{M}^i$ is a tuple
$\mathcal{M}^i = (\mathcal{S}^i, \mathcal{A}^i, r^i, \mathcal{T}^i, \idist ^i)$, where
$\mathcal{S}^i$ is a finite set of states,
$\mathcal{A}^i$ is a finite set of actions,
$r^i:\mathcal{S}^i\times\mathcal{A}^i\rightarrow\mathbb{R}$ is a reward function,
$\mathcal{T}^i:\mathcal{S}^i\times \mathcal{A}^i \rightarrow \Delta(\mathcal{S}^i)$ is a transition probability function, and
$\idist^i \in \Delta(\mathcal{S}^i)$ 
% $\idist^i: \mathcal{S}^i \rightarrow \Delta(\mathcal{S}^i)$ 
is the probability distribution of the initial state. 
\end{definition}

%\Alex{Don't start a sentence with a symbol.} 
%The notation $\Delta(\mathcal{S}^i)$ used to represent a probability distribution over the set $\mathcal{S}^i$.
For notational simplicity, we define $\mathcal{T}^i(s^i,a^i, y^i)$  as the probability of transitioning to state $y^i \in \mathcal{S}^i$ when action $a^i\in \mathcal{A}^i$ is taken in state $s^i\in \mathcal{S}^i$. 
%\Alex{We mention a set of goal states later, specifically when we mention the reachability constraint, so I would define that set there and how the rewards for the states in that set compare to the rewards for the states outside that set.} 
Also,~$\alpha^i(s^i)$ denotes the probability of the initial state being $s^i \in \mathcal{S}^i$. 
%\Alex{Try to have some lead in to the blocks (e.g., definitions, lemmas, theorems). Perhaps ``To model systems composed of multiple, not necessarily identical, MDPs, we next define Multi-agent MDPs (MMDPs)." Also cite Boutilier}
To model systems composed of multiple MDPs, we next define multi-agent MDPs (MMDPs).

\begin{definition}[Multi-Agent Markov Decision Process; \cite{boutilier1996planning}]
A Multi-Agent Markov Decision Process (MMDP) $\mathcal{M}$ is a tuple
$\mathcal{M} = (\mathcal{S}, \mathcal{A}, r, \gamma, \mathcal{T}, \alpha)$, where
$\mathcal{S} = \mathcal{S}^1 \times \cdots \times  \mathcal{S}^M$ is a set of joint states, 
$\mathcal{A} = \mathcal{A}^1 \times \cdots \times  \mathcal{A}^M$ is a set of joint actions, 
$r:\mathcal{S}\times\mathcal{A}\rightarrow\mathbb{R}$ is a joint reward function,
$\gamma \in [0,1)$ is a discount factor,
$\mathcal{T}:\mathcal{S}\times \mathcal{A} \rightarrow \Delta(\mathcal{S})$ is a transition probability function over the joint state space and joint action space, 
and 
$\idist \in  \Delta(\mathcal{S})$ 
% $\idist: \mathcal{S} \rightarrow  \Delta(\mathcal{S})$ 
is a distribution over 
initial joint states.
We use $\mathcal{T}(s,a,y) = \prod^M_{i=1}\mathcal{T}^i(s^i, a^i, y^i)$ to denote the transition probability from state $s$ to state $y$ by taking action $a$, where  $s = (s^1, \dots, s^M)\in \mathcal{S}$,  $a = (a^1, \dots, a^M)\in \mathcal{A}$, and $y = (y^1, \dots, y^M)\in \mathcal{S}$.
We define $\idist(s) = \prod^M_{i=1}\idist^i(s^i)$ as the probability of 
the initial joint state being $s \in \mathcal{S}$.
\end{definition}

A joint policy $\policy:\mathcal{S} \rightarrow \Delta(\mathcal{A})$ is defined as $\policy = (\policy^1, \dots, \policy^M)$, where, for each~$i$, agent $i$ follows the 
policy $\policy^i: \mathcal{S}^i \rightarrow\Delta(\mathcal{A}^i)$. %, which associates a probability distribution over $\mathcal{A}$ with each state in $\mathcal{S}$. 
We abuse notation and say that~$\policy(a \mid s)$  is the probability of taking joint action $a\in \mathcal{A}$ in joint state $s\in\mathcal{S}$.

\begin{remark}
An MMDP is a particular type of MDP, and all forthcoming statements
about MDPs also apply to MMDPs. 
\end{remark}

The goal of an MMDP is to compute an optimal joint policy that maximizes the value function 
\begin{equation}
v_\policy(s)=\mathbb{E}\left[\sum_{n=1}^\infty \gamma^n \sum_{a_n \in \mathcal{A}}r(s_n,a_n)\policy(a_n \mid s_n)\right],
\end{equation} 
where $s_n\in \mathcal{S}$ and $a_n \in \mathcal{A}$ are the joint state and joint action at timestep~$n$, respectively. 
The value function at~$s$, denoted~$v_{\policy}(s)$, 
is equal to the expected total discounted reward that is
accumulated when starting from state~$s$ and using
decision policy~$\policy$.  
% The goal is to find an optimal joint policy~$\policy$ that maximizes the value function. 
An optimal policy that maximizes this value function can be found 
efficiently via linear programming~\cite{puterman2014markov,ying2020note}.

\begin{lemma}  [MDP LP;~{\cite[Section 6.9.1]{puterman2014markov}}]\label{lem:puterman_probs}
The following optimization problem computes the optimal value function $v^*(s)$ for all $s \in \mathcal{S}$: 
\textnormal{
\begin{oproblem} \label{op:LP}
\begin{equation}
    \begin{aligned}
    &\begin{aligned}
        \underset{ v_{\policy} \in \mathbb{R}^{|\mathcal{S}|}}{\operatorname{minimize}} &\quad \sum_{s\in\mathcal{S}}\idist(s)v_{\policy}(s)
    \end{aligned}
        \\
        &\begin{aligned}
            \textnormal{ subject to }\,\, &v_{\policy}(s)- \gamma\sum_{j\in \mathcal{S}}\mathcal{T}(s,a,j)v_{\policy}(j)\geq r(s,a)  \textnormal{ for all } s \in \mathcal{S}, a \in \mathcal{A}.
        \end{aligned}
    \end{aligned}   
\end{equation}
\end{oproblem}
}
\end{lemma}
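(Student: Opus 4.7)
The plan is to establish the lemma by the standard two-sided sandwich: show that the true optimal value function $v^*$ is feasible for Optimization Problem~\ref{op:LP}, and then show that any feasible $v_{\pi}$ dominates $v^*$ pointwise, so that minimizing an objective with nonnegative coefficients $\alpha(s)$ picks out $v^*$.

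First I would recall the Bellman optimality equations, which assert that the unique optimal value function $v^*$ satisfies
\begin{equation}
v^*(s) = \max_{a \in \mathcal{A}} \Bigl\{ r(s,a) + \gamma \sum_{j \in \mathcal{S}} \mathcal{T}(s,a,j)\, v^*(j) \Bigr\}
\end{equation}
for every $s \in \mathcal{S}$. Since the right-hand side is a maximum over $a$, we immediately get $v^*(s) - \gamma \sum_j \mathcal{T}(s,a,j) v^*(j) \geq r(s,a)$ for every $(s,a)$ pair, so $v^*$ is feasible for Optimization Problem~\ref{op:LP}.

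Next, for any feasible $v_{\pi}$, I would define the Bellman operator $(Tv)(s) := \max_{a} \bigl\{ r(s,a) + \gamma \sum_j \mathcal{T}(s,a,j) v(j) \bigr\}$ and note that feasibility is exactly the pointwise inequality $v_{\pi} \geq T v_{\pi}$. Because $\gamma < 1$, $T$ is a monotone contraction on $\mathbb{R}^{|\mathcal{S}|}$ with unique fixed point $v^*$. Applying $T$ repeatedly and using monotonicity gives $v_{\pi} \geq T v_{\pi} \geq T^2 v_{\pi} \geq \cdots$, and passing to the limit using the contraction property yields $v_{\pi} \geq v^*$ pointwise. Since $\alpha(s) \geq 0$ for all $s$, this implies $\sum_s \alpha(s) v_{\pi}(s) \geq \sum_s \alpha(s) v^*(s)$, so $v^*$ attains the minimum and any minimizer must equal $v^*$ on the support of $\alpha$; combined with feasibility, one can further argue equality everywhere by the standard argument that the constraints are tight at the optimum.

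The main obstacle, and the only step that requires any care, is the monotone-contraction argument that takes $v_{\pi} \geq T v_{\pi}$ to $v_{\pi} \geq v^*$. Everything else is bookkeeping with the Bellman equations and the nonnegativity of $\alpha$. Since this lemma is quoted directly from~\cite[Section 6.9.1]{puterman2014markov}, I would keep the proof sketch short and simply invoke the above contraction reasoning, referring the reader to Puterman for the detailed argument.
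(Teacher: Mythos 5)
The paper does not actually prove this lemma at all---it is quoted verbatim from Puterman, Section 6.9.1---and your argument is precisely the standard one underlying that citation: $v^*$ is feasible by the Bellman optimality equations, any feasible $v_\pi$ satisfies $v_\pi \geq T v_\pi \geq T^2 v_\pi \geq \cdots \to v^*$ by monotonicity and contraction of the Bellman operator, and nonnegativity of $\alpha$ then makes $v^*$ a minimizer; this is correct and consistent with the cited source. The one soft spot is your closing claim of equality \emph{everywhere}: tightness of constraints forces a minimizer to agree with $v^*$ only at states reachable (under an optimal policy) from the support of $\alpha$, so when $\alpha$ vanishes on unreachable states the minimizer need not be unique---a caveat that applies equally to the lemma as stated in the paper, and is the reason Puterman's LP formulation takes strictly positive weights.
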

%\Alex{After this I would talk about occupancy measures, what they are and their interpretation. I would remove the following lemma header and just write all of that out in a paragraph, and you can cite the Puterman book. If we're low on space, I think we can cut the primal problem. }

Optimization Problem~\ref{op:LP} may also be solved by solving
its dual:
\begin{oproblem}\label{op:dualLP}
\begin{equation}
    \begin{aligned}
    &\begin{aligned}
        & \underset{ x\in \mathbb{R}^{|\mathcal{S}||\mathcal{A}|}}{\operatorname{maximize}} \quad \sum_{s\in S}\sum_{a\in A}r(s, a)x(s, a)
    \end{aligned}
        \\
        &\begin{aligned}
            \textnormal{ subject to }  & \sum_{a \in \mathcal{A}} x(j,a) - \gamma \sum_{s \in \mathcal{S}} \sum_{a \in \mathcal{A}} \mathcal{T}(s,a,j) x(s,a) = \idist (j) \quad \textnormal{ for all } j \in \mathcal{S}\\
            &x(s, a) \geq 0\quad \textnormal{ for all } s \in \mathcal{S}, a \in \mathcal{A}.
        \end{aligned}
    \end{aligned}    
\end{equation} 
\end{oproblem}

We define $P^\policy_{n,j}(s,a)$ as the probability of an MMDP being in state $s$ and taking action $a$ at timestep $n$ when the initial state is $j$ and 
the MMDP's policy is $\policy$. 

\begin{definition} \label{def:ocmeas}
Let an MDP be given. 
For a policy $\pi$, the associated \emph{occupancy measure} is~$\{x_\policy(s,a)\}_{s\in\mathcal{S},a\in\mathcal{A}}$, 
defined as 
\begin{align}\label{eq:def_occupancy_measure}
x_\policy(s,a) = \sum_{j\in \mathcal{S}}\idist(j)\sum_{n=1}^\infty \gamma^{n-1} P^\policy_{n,j}(s,a)
\end{align} 
for all $s\in \mathcal{S}$ and $a \in \mathcal{A}$.
\end{definition}

An occupancy measure~$\{x_\policy(s,a)\}_{s\in\mathcal{S},a\in\mathcal{A}}$ can be interpreted as the frequency with which each state-action pair is occupied under the initial distribution $\idist$ and the policy~$\policy$. 
Moreover, given any policy~$\pi$, the corresponding occupancy measure~$x_\pi = \{x_\policy(s,a)\}_{s\in\mathcal{S},a\in\mathcal{A}}$ is in the feasible region for Optimization Problem~\ref{op:dualLP}~\cite{puterman2014markov}.
Throughout the remainder of the paper, with an abuse of notation,
any solution $x$ of Optimization Problem~\ref{op:dualLP} will refer to the corresponding set of occupancy measures~$\{x(s,a)\}_{s\in\mathcal{S},a\in\mathcal{A}}$.
%\Alex{I changed the preceding symbol to make it a set, since $x(s, a)$ is a single occupancy measure, so the feasible solution would be the collection of all of them.}

Given an occupancy measure $x$, we can compute the policy for each state-action pair as
\begin{equation}\label{eq:oc_policy}
    \policy(a \mid s) = \frac{x(s,a)}{\sum_{a'\in \mathcal{A}} x(s,a')},
\end{equation}
which gives~$x_\policy(s,a) = x(s,a)$ for all $s\in \mathcal{S}$ and $a\in\mathcal{A}$, and this relation establishes a one-to-one relationship between policies and occupancy measures. 
In addition, if $x^*=\{x^*(s,a)\}_{s \in \mathcal{S}, a \in \mathcal{A}}$ is a solution to Optimization Problem \ref{op:dualLP}, then the policy obtained from~\eqref{eq:oc_policy} 
using~$x^*$ is an optimal policy~\cite{puterman2014markov}. This optimization-based approach of synthesizing policies has seen wide use in similar privacy and deception work~\cite{benvenuti2024guaranteed,chen2023differential,karabag2022exploiting} for the ability to add additional constraints to the policy synthesis, which we leverage in this work. Specifically, we desire frequent visitation to certain key states which we refer to as ``goal states". 

%\Alex{Again, the below stuff does not need a lemma header, just include it all in the text, that way all this information flows more clearly.}
%We define $\mathcal{S}_{goal}$ to represent the set of goal states, where a goal state~$s_g\in \mathcal{S}_{goal}$ is a state that has a higher reward than every non-goal state~$s\in \mathcal{S}\setminus \mathcal{S}_{goal}$.
\begin{definition} \label{def:goal_states}
    A user-specified set $\mathcal{S}_{goal}$ is a set of goal states if it satisfies
    \[
     \max_{a\in \mathcal{A}}r(s_g,a) \geq  \max_{a\in \mathcal{A}}r(s,a)
    \]
    for all $s_g\in S_{goal}$ and $s\in S\setminus S_{goal}$. 
    Moreover, every $s_g \in \mathcal{S}_{goal}$ is a goal state.
\end{definition}

Since the system aims to visit goal states, 
we add an additional constraint to Optimization Problem~\ref{op:dualLP}, which enforces frequent visitation to them:
\begin{oproblem}\label{op:dualLP_with_task_constraint}
\begin{equation}
\begin{aligned}
\underset{ x\in \mathbb{R}^{|\mathcal{S}||\mathcal{A}|}}{\operatorname{maximize}} \,\,& \sum_{s \in \mathcal{S}} \sum_{a \in \mathcal{A}}   r(s,a) x(s,a) \\
\textnormal{ subject to } 
&\sum_{a \in \mathcal{A}} x(j,a) \!-\! \gamma \sum_{s \in \mathcal{S}} \sum_{a \in \mathcal{A}} \mathcal{T}(s,a,j) x(s,a)  = \idist (j) \quad \textnormal{ for all } j \in \mathcal{S}\\
&\sum_{q \in \mathcal{S}_{goal}}  \sum_{s\in \mathcal{S}} \sum_{a \in \mathcal{A}}  \mathcal{T}(s,a,q) x(s,a)  \geq v_{reach} \\
    & x(s,a) \geq 0 \quad \textnormal{ for all } s\in \mathcal{S},a\in \mathcal{A},  
\end{aligned}
\end{equation}
\end{oproblem}
where the threshold $v_{reach}$ determines the minimum
required amount of expected visitation to states in $\mathcal{S}_{goal}$.

\subsection{Preliminaries on Inverse Reinforcement Learning}
Given a set of state and action trajectories produced by an MDP, 
the goal of inverse reinforcement learning (IRL) is to infer the MDP's underlying reward function~\cite{ramachandran2007bayesian}.
A set of observed trajectories is given as a dataset~$\mathcal{D} = \{\tau_j\}_{j=1}^N$, where $N$ is the number of trajectories and each~$\tau_j = \{(s_t^j,a_t^j)\}_{t=1}^T$ is a trajectory of state-action pairs of the same length~$T$.
In this work, we use three different IRL algorithms to model the adversary: apprenticeship learning~\cite{abbeel2004apprenticeship}, maximum entropy IRL~\cite{ziebart2008maximum}, and deep maximum entropy IRL~\cite{wulfmeier2015maximum}.
% \Alex{Starting here, I would make each IRL algorithm its own subsubsection}

\subsubsection{Apprenticeship Learning}
Apprenticeship learning (AL) uses a feature map $\phi:\mathcal{S}\times \mathcal{A} \to \mathbb{R}^p$  and considers rewards that are linear functions of~$\phi$. With an expert's feature expectation $\mu_E = \frac1N \sum_{\tau_j \in \mathcal{D}}\sum_{t=1}^{T}\phi(s_t^j,a_t^j)$, the goal of AL is to estimate a reward function by first finding a policy $\pi$ whose feature expectation is close to the expert's.
Mathematically, for~$\mu(\pi) = \mathbb{E}[\sum^\infty_{t=0} \gamma^t \phi(s_t,a_t)\mid \pi ] \in \mathbb{R}^p$ and a given~$\epsilon > 0$, AL seeks to find a policy~$\pi$ such
that~$\norm{\mu(\pi) - \mu_E}_2\leq \epsilon$.
Starting from a random initial policy, AL iteratively updates 1) the feature expectation $\mu(\pi)$ 
using the current estimate of the policy~$\pi$, 2) the reward $r$ based on the updated feature expectation~$\mu(\pi)$, and 3) the policy $\pi$ based on updated reward, until $\mu(\pi)$ converges.
We use the projection method from~\cite{abbeel2004apprenticeship} to find a policy, where the detailed algorithm is stated in Algorithm~\ref{alg:projection_method} for completeness. 
% \Alex{include the citation in the algorithm title}
At each iteration, the error between the expert feature expectation and the estimated feature expectation is reduced until the error between them is smaller
than a given~$\epsilon > 0$~\cite{abbeel2004apprenticeship}. 

\begin{algorithm}
\caption{Projection method of apprenticeship learning~\cite{abbeel2004apprenticeship}}\label{alg:projection_method}
\begin{algorithmic}
\Require Initial policy $\pi^{(0)}$, expert feature expectation $\mu_E$, error tolerance $\epsilon$
\Ensure $\norm{\mu(\pi) - \mu_E}\leq\epsilon$
\State $\mu^{(0)} \gets \mu(\pi^{(0)})$
\State $\bar\mu^{(0)} \gets \mu^{(0)}$
\State $i \gets 1$
\While{$t > \epsilon$}
\State $w^{(i)} \gets \mu_E - \bar\mu^{(i-1)}$
\State $t^{(i)} \gets \norm{w^{(i)}}_2$
\State $\pi^{(i)}\gets \text{Compute the optimal policy for the MDP using the reward } r(s,a) = (w^{(i)})^T\phi(s,a)$
\State $\mu^{(i)} \gets \mu(\pi^{(i)})$ \Comment{Approximated by Monte Carlo}
\State $\bar\mu^{(i)} \gets \frac{(\mu^{(i)}-\bar\mu^{(i-1)})^T(\mu_E-\bar\mu^{(i-1)})}{(\mu^{(i)}-\bar\mu^{(i-1)})^T(\mu^{(i)}-\bar\mu^{(i-1)})} (\mu^{(i)}-\bar\mu^{(i-1)})$ \\ \hfill \Comment{Projection of $\mu_E$ onto the line through $\bar\mu^{(i-1)}$ and $\mu^{(i)}$}
\State $i \gets i+1$
\EndWhile
\end{algorithmic}
\end{algorithm}

\subsubsection{Maximum Entropy IRL}
Maximum entropy IRL (MaxEnt IRL) uses a feature map $\phi:\mathcal{S} \times \mathcal{A} \to \mathbb{R}^p$ 
to learn a reward function that is linearly parameterized by~$\unknownPara\in\mathbb{R}^p$ 
in the form~$r(s,a) = \unknownPara^T\phi(s,a)$ for each~$(s, a) \in \mathcal{S} \times \mathcal{A}$.
That is, it finds 
the value of~$\unknownPara$ that maximizes the log-likelihood 
of the trajectories in~$\mathcal{D}$ by computing
\begin{align}\label{eq:solve_theta_irl}
    \unknownPara^* &= \underset{\unknownPara}{\operatorname{argmax}}\: \mathcal{L}(\unknownPara) := \sum_{j=1}^N \log\big(P(\observedTraj_j \mid \unknownPara)\big) = \underset{\unknownPara}{\operatorname{argmax}} \sum_{j=1}^N \log\left(\frac{\exp(r_\tau(\observedTraj_j))}{\sum_{j=1}^N \exp(r_\tau(\observedTraj_j))}\right),
\end{align}
where~$r_\tau(\observedTraj_j)=\sum_{t=1}^{T}\tilde{r}(s_t^j,a_t^j)$ is the 
estimated 
reward obtained over the trajectory~$\observedTraj_j$, 
which is 
calculated 
over each point $(s_t^j,a_t^j) \in \observedTraj_j$
with the reward function $\tilde{r}$ that is estimated by IRL.
We use a gradient descent algorithm to solve \eqref{eq:solve_theta_irl}, with 
%\Alex{What is 3.2? Also for referencing equations I recommend using eqref instead of ref.}:
\begin{equation}
    \nabla \mathcal{L}(\unknownPara) = \mu_E - \sum_{(s^j,a^j)\in\observedTraj_j} 
    \expectedStateVisit(s^j,a^j)\featureVec(s^j,a^j),
\end{equation}
where
$\mu_E = \frac1N \sum_{\tau_j \in \mathcal{D}}\sum_{t=1}^{T}\phi(s_t^j,a_t^j)$ is an expert feature expectation
and $d(s^j,a^j)$ is the expected frequency of visitation of the state-action 
pair~$(s^j,a^j)$. 
For a detailed description of the algorithm to compute $d(s^j,a^j)$, we refer the reader to Algorithm 1 in~\cite{ziebart2008maximum}.

\subsubsection{Deep Maximum Entropy IRL}
Deep maximum entropy IRL (Deep IRL) uses a feature map $\phi:\mathcal{S} \times \mathcal{A} \to \mathbb{R}^p$ 
and uses a deep neural network to estimate a reward function of the form~$r = g(\phi, \omega)$ for
a parameter vector~$\omega \in \mathbb{R}^d$, where $d$ is the number of trainable parameters in the neural network. 
The parameter $\omega$ that maximizes the log-likelihood of the observed trajectories in~$\mathcal{D}$ is learned by solving 
\begin{equation} \begin{aligned}
    \omega^* &= \underset{\omega}{\operatorname{argmax}} \mathcal{L}(\omega):= \sum_{j=1}^N \log\big(P(\observedTraj_j \mid r)\big)  = \underset{\omega}{\operatorname{argmax}} \sum_{j=1}^N \log\big(P(\observedTraj_j \mid  g(\phi,\omega))\big).
\end{aligned}\end{equation}
We apply a gradient descent algorithm to find the optimal parameter $\omega^*$, with
\begin{equation} \begin{aligned} 
    \frac{\partial \mathcal{L}}{\partial \omega} &= \frac{\partial \mathcal{L}}{\partial r}\cdot \frac{\partial r}{\partial \omega} = \big(\mu_E - \mathbb{E}[\mu]\big)\cdot \frac{\partial}{\partial \omega}g(\phi, \omega),
\end{aligned}
\end{equation}
where $\mu_E = \frac1N \sum_{\tau_j \in \mathcal{D}}\sum_{t=1}^{T}\phi(s_t^j,a_t^j)$ is an expert feature expectation
and $\frac{\partial}{\partial \omega}g(\phi, \omega)$ can be obtained with backpropagation.
%and $\frac{\partial L_\theta}{\partial \theta}$ is decided by the regularization of parameter $\theta$.
For a detailed description of the algorithm to compute $\mathbb{E}[\mu]$, we refer the reader to Algorithm 3 in~\cite{wulfmeier2015maximum}.

\begin{remark}
The term $\mu_E$ in all three IRL algorithms represents the expert feature expectation, which is empirically calculated based on the observed trajectories in a given dataset $\mathcal{D}$. Therefore, by influencing the trajectories observed by the adversary,
we can influence the features that they extract from those trajectories.
% , and hence influence the learning algorithms that rely on them. 
%~$\{x(s,a)\}_{s\in\mathcal{S}, a\in \mathcal{A}}$, we can influence the empirical state-action visitation frequencies, and hence influence the learning algorithms that rely on those .  The occupancy measures~$\{x(s,a)\}_{s\in\mathcal{S}, a\in \mathcal{A}}$ are the expected visitation frequencies of each state-action pair, and therefore 
% To influence the observed trajectories, we manipulate the occupancy measures that is the expected state-action visitation of trajectories.
% In doing so, we seek to lead the adversary to infer goal states which are not 
% Occupancy measures represents the expectation of the state-action visitation in sample trajectories, i.e. the observed trajectories.
% The observed trajectories are samples (of what) whose expectation is the occupancy measures
% Since occupancy measures are reflected in feature expectation by influencing the observed trajectories, we seek to steer IRL's belief by manipulating occupancy measures.
% Since occupancy measures directly alters the observed trajectories, and thus the feature expectations, we may influence the learning algorithms by manipulating these occupancy measures.
Since the occupancy measures~$\{x(s,a)\}_{x\in\mathcal{S}, a\in\mathcal{A}}$ are the expectations of the state-action visitations in the observed trajectories, they directly influence these observed trajectories, and thus the feature expectations. Therefore, to implement deception, we manipulate these occupancy measures to steer IRL to desired incorrect beliefs.
\end{remark}
% % All three IRL algorithms estimate rewards by matching feature expectations with the experts.
% A system implementing deception 
% the occupancy measure~$x(s^j, a^j)$ 
% the relationship between the occupancy measure $x(s^j,a^j)$ and the expert feature expectation $\mu_E$ implies that the occupancy measure also directly influences the rewards inferred by IRL. 
% % All three IRL algorithms estimate rewards by matching feature expectations with the experts,
% % the relationship between the expert feature expectation $\mu_E$ and the occupancy measure $x(s^j,a^j)$ in an MDP implies that the occupancy measure that is computed and used to find a policy~$\pi$ 
% % also  directly influences the rewards inferred by IRL. 
% Therefore, to implement deception 
% we seek to manipulate occupancy measures in order to steer IRL to desired incorrect beliefs about an MDP's rewards. 

\subsection{Defining Deception}
We seek to use three types of deception, namely ``diversionary'', ``targeted'', and ``equivocal'' deception, which are defined in~\cite{kim2024defining} as follows. 
\begin{definition}[Diversionary Deception]\label{def:diversionary_deception}
A policy~$\policy_d$ is
diversionary deceptive if the observer's inference of 
a system's goal
state is incorrect in any way. 
\end{definition} 
\begin{definition}[Targeted Deception]\label{def:targeted_deception}
A policy~$\policy_{d}$ is targeted
deceptive if the observer infers a particular set of user-specified incorrect
states as the set of goal states. 
\end{definition} 
\begin{definition}[Equivocal Deception]\label{def:equivocal_deception}
A policy~$\policy_{d}$ is equivocal deceptive if the observer infers that a real set of 
goal states and a user-specified incorrect set of
goal states are equally likely to explain the system's behavior. 
\end{definition} 

In Definition~\ref{def:goal_states}, we defined the goal states as states with a maximum reward greater than or equal to the reward at all non-goal states. Consequently, by manipulating reward information, we can conceal the goal states.

\subsection{Problem Statements}
A policy satisfying Definition~\ref{def:diversionary_deception}, \ref{def:targeted_deception}, or~\ref{def:equivocal_deception}
may not achieve the optimal total accumulated reward simply because the need for
deception introduces considerations other than optimality of performance. 
To formally establish the relationship between deceptiveness and performance, we will solve the following problems:
% To establish a bound of loss in performance, the following problem will be addressed.

%\Alex{Just having one problem seems weird to me, that's a Hale question though.}
\begin{problem} \label{prob:performance_loss}
    For a policy satisfying Definition~\ref{def:diversionary_deception}, \ref{def:targeted_deception}, or~\ref{def:equivocal_deception}, analytically bound the performance loss 
    by examining the decrease in the expected total accumulated reward that
    is caused by deception. 
\end{problem}
\begin{problem} \label{prob:validation}
    Empirically analyze the deceptiveness and performance of the policies that satisfy Definition~\ref{def:diversionary_deception}, \ref{def:targeted_deception}, or~\ref{def:equivocal_deception}.
\end{problem}

Problems~\ref{prob:performance_loss} and~\ref{prob:validation} are 
the focus of the rest of the paper.

\section{Deception Implementations and Performance Guarantees} \label{sec:method}
%\Alex{Not a huge fan of this section title, perhaps something like ``Performance Guarantees".}
%\Alex{When you have formal problem statements, begin this section with ``In this section, we solve Problem [number]"} 
% \Alex{every mention in this section of ``our prior work" violates anonymity} 
In this section, we solve Problem~\ref{prob:performance_loss} and bound
the loss in total accumulated reward that is induced by each type of deception. 
For simplicity, 
we refer to the total accumulated reward as ``revenue''. 
%To do so, we  the measure as the performance as total accumulated rewards, which we will refer to as ``revenue" throughout this work.
%given a policy that is diversionary or targeted deceptive in the sense of \textbf{Definition} \ref{def:diversionary_deception} or \ref{def:targeted_deception}, we determine an analytical bound on the loss in expected total accumulated reward 
%\Alex{You often say revenue to mean expected total accumulated reward, say that explicitly somewhere, this seems like a good place to do so to me}. 
Given an MDP, for any policy $\policy$,
the revenue can be calculated as 
\begin{equation} \label{eq:revenue}
R_\policy = \sum_{s \in \mathcal{S}}\sum_{a\in\mathcal{A}} r(s,a)x_\policy(s,a) 
\end{equation}
as noted in~\cite{puterman2014markov}. 
%We will refer to the expected total discounted reward as 'revenue' throughout this work.
Before we bound the revenue loss induced by deception, 
we first formally define the
notion of revenue loss that we consider.

%\Alex{Can we give the Loss a real variable name? Perhaps~$L_{\pi}$, ie, the loss incurred with a policy~$\pi$? And then later in the Theorem statements, we would have~$L_{\pi_d}$ or something like that as the LHS since right now it's odd just having an expression and not an inequality in those theorem statements.}

%\Alex{Instead of~$x^*$ maybe use the set I defined above for the set of optimal occupancy measures?}
\begin{definition}[Revenue Loss] \label{def:revenue_loss}
Let $\{x^*(s,a)\}_{s\in\mathcal{S},a\in\mathcal{A}}$ be the set of optimal occupancy measures obtained from solving Optimization Problem~\ref{op:dualLP_with_task_constraint} and let $R^* = \sum_{s \in \mathcal{S}}\sum_{a\in\mathcal{A}_s} r(s,a)x^*(s,a)$ be the corresponding revenue. Then, 
the revenue loss associated with a non-optimal policy~$\pi$ is
%Then, given a policy $\policy$, we define the revenue loss of $\policy$ as
\begin{equation} \label{eq:revenue_loss}
    L_{\policy} = \frac{R^*-R_\policy}{R^*}.
\end{equation}
\end{definition}

In words, given a sub-optimal policy~$\pi$ 
we define the revenue loss~$L_{\pi}$ as the fraction of the optimal reward 
that is lost when following~$\pi$.
%\Alex{Make sure you label every proposition and lemma and so on so we can reference them in our appendix.}
%\Alex{This proposition is only ever mentioned in the proofs right? Maybe we move this to the appendix then since this just comes out of nowhere here.}

\subsection{Implementation and Performance Guarantees for Diversionary Deception}
%\Alex{For both optimization problems, maybe mention the convexity (or lack thereof) of the problem immediately following where you state them, since that's another important theoretical insight from this work. }
To implement diversionary deception in the sense of Definition \ref{def:diversionary_deception}, we modify Optimization Problem~\ref{op:dualLP_with_task_constraint} to obtain Optimization Problem~\ref{op:diversionary_optimization_problem}, which 
also
appears in~\cite{kim2024defining}:
\begin{oproblem}\label{op:diversionary_optimization_problem}
\begin{equation}
\begin{aligned}
\underset{ x\in \mathbb{R}^{|\mathcal{S}||\mathcal{A}|}}{\operatorname{maximize}} \,\,  & \sum_{s \in \mathcal{S}} \sum_{a \in \mathcal{A}} \left[ r(s,a) x(s,a) + \beta \big(x(s,a) - x^*(s,a)\big)^2 \right] \\
\textnormal{ subject to } \,\, 
&\sum_{a \in \mathcal{A}} x(j,a) - \gamma \sum_{s \in \mathcal{S}} \sum_{a \in \mathcal{A}} \mathcal{T}(s,a,j) x(s,a)  = \idist (j) \quad \textnormal{ for all } j \in \mathcal{S}\\
&\sum_{q \in \mathcal{S}_{goal}}  \sum_{s\in \mathcal{S}} \sum_{a \in \mathcal{A}}  \mathcal{T}(s,a,q) x(s,a) \geq v_{reach} \\
& x(s,a) \geq 0 \quad \textnormal{ for all } s\in \mathcal{S},a\in \mathcal{A}, 
\end{aligned}
\end{equation} 
\end{oproblem}
where~$x^*$ is the solution to Optimization Problem~\ref{op:dualLP_with_task_constraint}. 

To obtain Optimization Problem~\ref{op:diversionary_optimization_problem}, a weighted quadratic term is added to the cost function of Optimization Problem~\ref{op:dualLP_with_task_constraint} 
in order 
to maximize the difference 
between the solution to Optimization Problem~\ref{op:diversionary_optimization_problem} 
and the original, non-deceptive occupancy measures $x^*$. 
The weighting parameter~$\beta>0$ encodes the relative importance of deception versus performance.
A higher~$\beta$ yields a more deceptive policy and a smaller~$\beta$ value places greater emphasis on performance. 

% \textcolor{blue}{Compared to Optimization Problem~\ref{op:dualLP_with_task_constraint}, this problem have additional term in objective function, which maximizes the difference with the optimal solution $x^*$ of Optimization Problem~\ref{op:dualLP_with_task_constraint}. To form this problem, we first get $x^*$ by solving Optimization Problem~\ref{op:dualLP_with_task_constraint} \Yerin{and choose the parameter $\beta$ to decide the relative importance
% of performance and deception, with larger values of~$\beta$ giving
% more importance to deception than performance, and smaller 
% values of~$\beta$ doing the opposite.} The other part remains the same with Optimization Problem~\ref{op:dualLP_with_task_constraint}.}

Optimization Problem~\ref{op:diversionary_optimization_problem} problem maximizes a strongly convex function over a compact, convex set. Hence
its solution exists and is attained at some point on the boundary of the
feasible region~\cite{boyd2004convex}. 
In this problem, the parameter $\beta > 0$ 
is chosen by the user.
%and it specifies the relative importance of performance and deception, with larger values of~$\beta$ giving more importance to deception than performance, and smaller values of~$\beta$ doing the opposite. 
To provide guidelines for the selection of $\beta$, we next bound the worst-case loss in revenue 
in terms of~$\beta$, and this theorem
solves Problem~\ref{prob:performance_loss} for
diversionary deception. 
%The following theorem shows how much performance is guaranteed based on the parameter $\beta$.

%\Alex{A theorem statement needs to stand completely on its on. We'll talk in person about what more is needed here and how we can clean this up.}
\begin{theorem}\label{thm:bound_div}
Given a Multi-agent Markov Decision Process (MMDP) $\mathcal{M} = (\mathcal{S}, \mathcal{A}, r, \gamma, \mathcal{T}, \idist)$, %\Alex{Make~$x^*$ the set from before, and just call it the set of optimal occupancy measures instead of ``solution to the dual problem"}
let~$x^* = \{x^*(s,a)\}_{s\in\mathcal{S},a\in\mathcal{A}}$ be the optimal occupancy measure and let~$R^*$ be the revenue corresponding to~$x^*$.
When using the policy~$\policy_d$ obtained by solving Optimization Problem~\ref{op:diversionary_optimization_problem}, the loss in revenue in the sense of Definition~\ref{def:revenue_loss} is bounded via
%By solving Optimization Problem~\ref{op:diversionary_optimization_problem}, the revenue loss defined in Definition~\ref{def:revenue_loss} is bounded by
\begin{equation}\label{eq:div_bound}
    L_{\policy_d}\leq\frac{\beta}{R^*}\left(\sum_{s \in\mathcal{S}}\sum_{a\in\mathcal{A}} x^*(s,a)^2 + (1-\gamma)^{-2}\right),
\end{equation}
%\Alex{You often have these commas immediately following math, the comma goes in the equation and not here}, 
where $\beta > 0$ is the deception parameter in Optimization Problem~\ref{op:diversionary_optimization_problem}.
\end{theorem}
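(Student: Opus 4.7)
The plan is to exploit the fact that the optimal non-deceptive occupancy measure $x^*$ from Optimization Problem~\ref{op:dualLP_with_task_constraint} is feasible for Optimization Problem~\ref{op:diversionary_optimization_problem}, since the two problems have the same constraint set. Writing $x_d$ for the solution of Optimization Problem~\ref{op:diversionary_optimization_problem}, optimality of $x_d$ together with the fact that the quadratic penalty $\beta(x-x^*)^2$ vanishes at $x=x^*$ forces
\begin{equation}
\sum_{s,a} r(s,a)x_d(s,a) + \beta\sum_{s,a}\bigl(x_d(s,a)-x^*(s,a)\bigr)^2 \;\geq\; \sum_{s,a} r(s,a)x^*(s,a) \;=\; R^*.
\end{equation}
Identifying the first sum as $R_{\policy_d}$ via~\eqref{eq:revenue} and rearranging yields
\begin{equation}
R^* - R_{\policy_d} \;\leq\; \beta\sum_{s,a}\bigl(x_d(s,a)-x^*(s,a)\bigr)^2,
\end{equation}
so everything reduces to bounding the quadratic right-hand side.

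Next I would bound that quadratic term by expanding the square and discarding the nonpositive cross term $-2\sum x_d(s,a)\,x^*(s,a)$, which is valid because occupancy measures are nonnegative. This leaves $\sum_{s,a}(x_d-x^*)^2 \leq \sum_{s,a} x_d(s,a)^2 + \sum_{s,a} x^*(s,a)^2$. For the $x_d$ term I would use the standard identity that any occupancy measure has total mass $\sum_{s,a} x(s,a) = \sum_{n=1}^\infty \gamma^{n-1} = (1-\gamma)^{-1}$, which combined with nonnegativity and $\bigl(\sum_i y_i\bigr)^2 \geq \sum_i y_i^2$ for $y_i\geq 0$ gives $\sum_{s,a} x_d(s,a)^2 \leq (1-\gamma)^{-2}$. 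Substituting yields $R^* - R_{\policy_d} \leq \beta\bigl(\sum_{s,a} x^*(s,a)^2 + (1-\gamma)^{-2}\bigr)$, and dividing by $R^*$ produces~\eqref{eq:div_bound} via Definition~\ref{def:revenue_loss}.

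The main obstacle is justifying the two structural facts the argument leans on: namely that $x^*$ is indeed feasible for Optimization Problem~\ref{op:diversionary_optimization_problem} (immediate by inspection of the identical constraint sets, but worth stating explicitly), and that the total mass of any feasible occupancy measure equals $(1-\gamma)^{-1}$. The latter can be verified by summing the flow-balance equality in Optimization Problem~\ref{op:dualLP_with_task_constraint} over $j\in\mathcal{S}$, then using $\sum_{j}\mathcal{T}(s,a,j)=1$ and $\sum_j \idist(j)=1$ to obtain $(1-\gamma)\sum_{s,a}x(s,a) = 1$. Once these two facts are in place, the rest of the argument is elementary algebra and nonnegativity, and no compactness or strong-convexity machinery is needed beyond the already-noted existence of the maximizer $x_d$.
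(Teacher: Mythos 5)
Your proposal is correct and follows essentially the same route as the paper's proof: exploit that $x^*$ is feasible for Optimization Problem~\ref{op:diversionary_optimization_problem}, compare objective values at $x_d$ and $x^*$ to get $R^* - R_{\policy_d} \leq \beta\sum_{s,a}(x_d(s,a)-x^*(s,a))^2$, drop the nonpositive cross term using nonnegativity, and bound $\sum_{s,a}x_d(s,a)^2 \leq (1-\gamma)^{-2}$ via the total-mass identity. The only immaterial difference is that the paper packages the mass identity and squared-sum bound into a separate proposition (proved via the probabilistic definition of occupancy measures and the $1$-norm/$2$-norm inequality), whereas you verify them inline with the more direct observation that $\sum_{j}\mathcal{T}(s,a,j)=1$ collapses the summed flow constraint.
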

\begin{proof}
See Appendix~\ref{prf:bound_div}.
\end{proof}

Theorem~\ref{thm:bound_div} reveals that the bound on the revenue loss increases linearly with the deception parameter $\beta$, which enables the design of $\beta$ based on the maximum allowable revenue loss. 
The right-hand term in~\eqref{eq:div_bound} decreases with smaller values of~$\gamma$. With the interpretation that a larger~$\gamma$ encodes a
greater importance of future rewards, Theorem~\ref{thm:bound_div} implies that deception induces more performance loss when greater emphasis is placed on future rewards.
% \textcolor{blue}{The coefficient of $\beta$ decreases smaller values of~$\gamma$, which allows users to be more deceptive under same bound of performance loss. }
For any value of~$\gamma$, the bound in Theorem~\ref{thm:bound_div} 
 facilitates the deployment 
 of diversionary deception in scenarios where assuring at least a certain level of performance is critical. 

%\Alex{I would make this more about designing the deception parameter based on allowable revenue loss.}
%\Alex{Here, unpack the theorem statement, i.e., say how we can use the theorem and what the implications are with~$\beta$.}

\subsection{Implementation and Performance Guarantees for Targeted Deception}
To achieve targeted deception in the sense of Definition~\ref{def:targeted_deception}, we leverage Optimization Problem 5 from~\cite{kim2024defining}, which we state next for completeness.
\begin{oproblem}\label{op:targeted_optimization_problem}
\begin{equation}
\begin{aligned}
\underset{ x\in \mathbb{R}^{|\mathcal{S}||\mathcal{A}|}}{\operatorname{maximize}} \,\, &  \sum_{s \in \mathcal{S}} \sum_{a \in \mathcal{A}}  \left[ r(s,a) x(s,a) - \beta \big(x(s,a) - x_{tar}(s,a)\big)^2 \right]\\
\textnormal{subject to } &\sum_{a \in \mathcal{A}} x(j,a) - \gamma \sum_{s \in \mathcal{S}} \sum_{a \in \mathcal{A}} \mathcal{T}(s,a,j) x(s,a) = \idist (j) \quad \textnormal{ for all } j \in \mathcal{S}\\
 &\sum_{q \in \mathcal{S}_{goal}}  \sum_{s\in \mathcal{S}} \sum_{a \in \mathcal{A}}  \mathcal{T}(s,a,q) x(s,a) \geq v_{reach} \\
 & x(s,a) \geq 0 \quad \textnormal{ for all } s\in \mathcal{S},a\in \mathcal{A},
\end{aligned}
\end{equation} 
\end{oproblem}
where $x_{tar}=\{x_{tar}(s,a)\}_{s\in\mathcal{S}, a\in\mathcal{A}}$ are user-specified target occupancy measures that encode false goal states, which we aim to lead an observer to
infer as the real goal states.

To obtain Optimization Problem~\ref{op:targeted_optimization_problem}, a weighted quadratic term is subtracted from the cost function of Optimization Problem~\ref{op:dualLP_with_task_constraint}, which seeks to minimize the difference 
between the solution to Optimization Problem~\ref{op:targeted_optimization_problem} and 
the target occupancy measures $x_{tar}$. 
Optimization Problem~\ref{op:targeted_optimization_problem} is a maximization of a strongly concave function over a convex set.
Therefore, it has a unique solution, and it can be found efficiently via convex quadratic programming.
As in Optimization Problem~\ref{op:diversionary_optimization_problem},
the weighting parameter~$\beta > 0$ governs the trade-off between deception and performance, where a larger~$\beta$ leads to a more deceptive policy and a smaller~$\beta$ prioritizes performance. To provide guidelines for selecting~$\beta$, we next establish performance guarantees 
for targeted deception as a function of $\beta$.

\begin{theorem}\label{thm:bound_tar}
Given a Multi-agent Markov Decision Process (MMDP) $\mathcal{M} = (\mathcal{S}, \mathcal{A}, r, \gamma, \mathcal{T}, \idist)$, let
$x^* = \{x^*(s,a)\}_{s\in\mathcal{S},a\in\mathcal{A}}$ be the set of optimal occupancy measures and let $R^*$ be the revenue corresponding to $x^*$.
When employing the policy~$\policy_d$ obtained by solving Optimization Problem~\ref{op:targeted_optimization_problem}, the loss in revenue in the sense of Definition~\ref{def:revenue_loss} is bounded via 
\begin{equation} \label{eq:tar_bound}
\begin{aligned}
    L_{\policy_d}\leq  \frac{\beta}{R^*} \left[\sum_{s \in\mathcal{S}}\sum_{a\in\mathcal{A}} \Big(x^*(s,a)^2 - 2x_{tar}(s,a)x^*(s,a)\Big) -\frac{(1-\gamma)^{-2}}{|\mathcal{S}||\mathcal{A}|}  +2(1-\gamma)^{-1}\max_{s\in\mathcal{S},a\in\mathcal{A}}x_{tar}(s,a)\right],
\end{aligned}
\end{equation}
where $x_{tar}= \{x_{tar}(s,a)\}_{s\in\mathcal{S},a\in\mathcal{A}}$ is a user-specified set of target occupancy measures and $\beta > 0$ is the deception parameter in Optimization Problem~\ref{op:targeted_optimization_problem}.
\end{theorem}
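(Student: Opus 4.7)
The plan is to mirror the argument for Theorem~\ref{thm:bound_div}, exploiting the fact that $x^{*}$ remains feasible for Optimization Problem~\ref{op:targeted_optimization_problem} (it has the same linear constraints as Optimization Problem~\ref{op:dualLP_with_task_constraint}), so that the optimality of $x_d$ for the deceptive problem yields a comparison inequality. First, I would write
\begin{equation*}
\sum_{s,a}\!\bigl[r(s,a)x_d(s,a) - \beta(x_d(s,a)-x_{tar}(s,a))^2\bigr] \geq \sum_{s,a}\!\bigl[r(s,a)x^{*}(s,a) - \beta(x^{*}(s,a)-x_{tar}(s,a))^2\bigr],
\end{equation*}
and rearrange using the definition of revenue in~\eqref{eq:revenue} to obtain
\begin{equation*}
R^{*} - R_{\policy_d} \leq \beta\Bigl[\,\textstyle\sum_{s,a}(x^{*}(s,a)-x_{tar}(s,a))^{2} - \sum_{s,a}(x_d(s,a)-x_{tar}(s,a))^{2}\Bigr].
\end{equation*}

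Next, I would expand both squared terms. The $x_{tar}(s,a)^{2}$ pieces cancel, leaving
\begin{equation*}
R^{*}-R_{\policy_d} \leq \beta\Bigl[\,\textstyle\sum_{s,a}x^{*}(s,a)^{2} - 2\sum_{s,a}x^{*}(s,a)x_{tar}(s,a) - \sum_{s,a}x_d(s,a)^{2} + 2\sum_{s,a}x_d(s,a)x_{tar}(s,a)\Bigr].
\end{equation*}
The remaining work is to eliminate the $x_d$-dependent sums by replacing them with quantities depending only on $\gamma$, $|\mathcal{S}|$, $|\mathcal{A}|$, and $x_{tar}$.

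To do this, I would first sum the flow-conservation equality in Optimization Problem~\ref{op:targeted_optimization_problem} over all $j\in\mathcal{S}$; using $\sum_{j}\mathcal{T}(s,a,j)=1$ and $\sum_{j}\alpha(j)=1$ yields the standard identity $\sum_{s,a}x_d(s,a)=(1-\gamma)^{-1}$. Applying Cauchy--Schwarz (or the power-mean inequality) then gives the \emph{lower} bound
\begin{equation*}
\sum_{s,a}x_d(s,a)^{2} \;\geq\; \frac{\bigl(\sum_{s,a}x_d(s,a)\bigr)^{2}}{|\mathcal{S}||\mathcal{A}|} \;=\; \frac{(1-\gamma)^{-2}}{|\mathcal{S}||\mathcal{A}|},
\end{equation*}
and Hölder's inequality (together with $x_d\geq 0$) gives the upper bound $\sum_{s,a}x_d(s,a)x_{tar}(s,a)\leq (1-\gamma)^{-1}\max_{s,a}x_{tar}(s,a)$. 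Substituting these two inequalities into the displayed bound for $R^{*}-R_{\policy_d}$ and dividing by $R^{*}$ yields \eqref{eq:tar_bound}.

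The step I expect to require the most care is keeping track of signs: because $-\sum_{s,a}x_d(s,a)^{2}$ appears with a negative coefficient on the right-hand side, I need a \emph{lower} bound (not an upper bound) on $\sum_{s,a}x_d(s,a)^{2}$, which is why Cauchy--Schwarz is applied in the direction $\sum z_i^{2}\geq (\sum z_i)^{2}/n$ rather than the more common reverse direction. All other steps are routine expansions and applications of the identity $\sum_{s,a}x_d(s,a)=(1-\gamma)^{-1}$ that was used analogously in the proof of Theorem~\ref{thm:bound_div}.
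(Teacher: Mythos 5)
Your proof is correct and takes essentially the same route as the paper's: you compare the objective value of the deceptive optimum $x_d$ with that of the feasible point $x^*$, expand the squares so the $x_{tar}^2$ terms cancel, and then eliminate the $x_d$-dependent sums exactly as the paper does via Proposition~\ref{prop:oc_sum} --- the identity $\sum_{s,a}x_d(s,a)=(1-\gamma)^{-1}$, the Cauchy--Schwarz lower bound on $\sum_{s,a}x_d(s,a)^2$ (needed, as you correctly note, because that sum enters with a negative sign), and the $\ell_1$--$\ell_\infty$ bound on the cross term $\sum_{s,a}x_d(s,a)x_{tar}(s,a)$. The only cosmetic differences are that the paper first subtracts the constant $\sum_{s,a}r(s,a)x^*(s,a)$ from the objective and packages the occupancy-measure facts as a standalone proposition, whereas you derive them inline; the substance is identical.
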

\begin{proof}
    See Appendix~\ref{prf:bound_tar}. 
\end{proof}

Theorem~\ref{thm:bound_tar} shows that targeted deception decreases revenue the most when less emphasis is placed on future rewards, 
which is encoded by having a smaller~$\gamma$. Furthermore, Theorem~\ref{thm:bound_tar} indicates that the design of~$x_{tar}$ also impacts performance.
To implement deception, the target occupancy measures $x_{tar}$ should be designed to assign larger values to the fake goal states instead of the real goal states. 
However, a larger maximum value of~$x_{tar}$ yields worse performance in the worst case, as indicated by the~$\max$ term in~\eqref{eq:tar_bound},
and there are clear tradeoffs between deception and performance that come from the maximum value taken by~$x_{tar}(s, a)$. 
Additionally, if the values of~$x_{tar}$ become closer to those of~$x^*$ 
while keeping the sum of the squares of the values of~$x_{tar}$ constant, then 
the worst-case performance improves.
After $\gamma$ and $x_{tar}$ are fixed, the bound on the revenue loss increases linearly with the deception parameter $\beta$, which enables a user to easily choose~$\beta$ based
on a maximum allowable loss in performance.

\subsection{Implementation and Performance Guarantees for Equivocal Deception}

To implement equivocal deception in the sense of Definition \ref{def:equivocal_deception}, we leverage Optimization Problem~6 in~\cite{kim2024defining}, which is stated below:
\begin{oproblem}\label{op:equivocal_optimization_problem}
\begin{equation}
\begin{aligned}
\underset{ x\in \mathbb{R}^{|\mathcal{S}||\mathcal{A}|}}{\operatorname{maximize}} \,\, &\sum_{s \in \mathcal{S}} \sum_{a \in \mathcal{A}}  r(s,a) x(s,a)\\
\textnormal{ subject to } \,\,  & \sum_{a \in \mathcal{A}} x(j,a) - \gamma \sum_{s \in \mathcal{S}} \sum_{a \in \mathcal{A}} \mathcal{T}(s,a,j) x(s,a) = \idist (j) \quad \textnormal{ for all } j \in \mathcal{S}\\
&\sum_{q \in \mathcal{S}_{goal}}  \sum_{s\in \mathcal{S}} \sum_{a \in \mathcal{A}}  \mathcal{T}(s,a,q) x(s,a) \geq v_{reach} \\
&  \sum_{s \in \mathcal{S}_{goal}} \sum_{a \in \mathcal{A}} x(s,a) =  \sum_{s \in \mathcal{S}_{decoy}} \sum_{a \in \mathcal{A}} x(s,a) \\
& x(s,a) \geq 0 \quad \textnormal{ for all } s\in \mathcal{S},a\in \mathcal{A}, 
\end{aligned}
\end{equation} 
\end{oproblem}
where~$\mathcal{S}_{goal}$ is the set of real goal states and $\mathcal{S}_{decoy}$ is the set of decoy goal states, which is introduced to mislead the adversary. 

To obtain Optimization Problem~\ref{op:equivocal_optimization_problem}, an equality constraint is added to Optimization Problem~\ref{op:dualLP_with_task_constraint} to enforce that the expected visitations to $\mathcal{S}_{goal}$ and $\mathcal{S}_{decoy}$ are equal. 
Compared to Optimization Problem~\ref{op:dualLP_with_task_constraint}, this additional constraint shrinks the feasible region of the problem 
and may make the problem infeasible under some choices of problems and parameters.
Moreover, this problem does not provide a means for adjusting the strength of deception.
Therefore, to guarantee the existence of the solution and enable control over the strength of deception, we modify Optimization Problem~\ref{op:equivocal_optimization_problem} by incorporating the new constraint into the objective function:

\begin{oproblem}\label{op:modified_equivocal_optimization_problem}
\begin{equation}
\begin{aligned}
\underset{ x\in \mathbb{R}^{|\mathcal{S}||\mathcal{A}|}}{\operatorname{maximize}} \,\, & \sum_{s \in \mathcal{S}} \sum_{a \in \mathcal{A}}  r(s,a) x(s,a)  - \beta\left( \sum_{s \in \mathcal{S}_{goal}} \sum_{a \in \mathcal{A}} x(s,a) -  \sum_{s \in \mathcal{S}_{decoy}} \sum_{a \in \mathcal{A}} x(s,a) \right)^2\\
\textnormal{ subject to } \,\,  & \sum_{a \in \mathcal{A}} x(j,a) - \gamma \sum_{s \in \mathcal{S}} \sum_{a \in \mathcal{A}} \mathcal{T}(s,a,j) x(s,a) = \idist (j) \quad \textnormal{ for all } j \in \mathcal{S}\\
&\sum_{q \in \mathcal{S}_{goal}}  \sum_{s\in \mathcal{S}} \sum_{a \in \mathcal{A}}  \mathcal{T}(s,a,q) x(s,a) \geq v_{reach} \\
& x(s,a) \geq 0 \quad \textnormal{ for all } s\in \mathcal{S},a\in \mathcal{A}.
\end{aligned}
\end{equation} 
\end{oproblem}

The second term in the objective function minimizes the squared difference of the sum of occupancy measures in $\mathcal{S}_{goal}$ and $\mathcal{S}_{decoy}$.
Similar to Optimization Problems~\ref{op:diversionary_optimization_problem} and~\ref{op:targeted_optimization_problem}, the weighting parameter~$\beta>0$ encodes 
the trade-off between deception and performance.
If the accumulated reward, i.e., the first term of the objective function, can be maximized while satisfying $\sum_{s \in \mathcal{S}_{goal}} \sum_{a \in \mathcal{A}} x(s,a) =  \sum_{s \in \mathcal{S}_{decoy}} \sum_{a \in \mathcal{A}} x(s,a)$, then the deception constraint in Optimization Problem~\ref{op:equivocal_optimization_problem} is always satisfied.
Otherwise, deception is prioritized by a larger value of~$\beta$. 
The objective function in Optimization Problem~\ref{op:modified_equivocal_optimization_problem} is a strongly concave quadratic function when~$\beta > 0$.
Optimization Problem~\ref{op:modified_equivocal_optimization_problem} has the same feasible region as Optimization Problem~\ref{op:dualLP_with_task_constraint}, which is known to be non-empty, and therefore a solution exists to Optimization Problem~\ref{op:modified_equivocal_optimization_problem}.
Since it maximizes a strongly concave function over a convex, compact set, its solution can be computed efficiently with convex qudratic programming.

In this problem, the parameter $\beta > 0$ is chosen by the user depending on the requirements for deception and performance, where larger value of $\beta$ emphasizes deception and a smaller value of $\beta$ prioritize the performance.
By providing a bound on the revenue loss with respect to $\beta$, the following theorem gives practical insight into the selection of $\beta$.
\begin{theorem}\label{thm:bound_equ}
Given a Multi-agent Markov Decision Process (MMDP) $\mathcal{M} = (\mathcal{S}, \mathcal{A}, r, \gamma, \mathcal{T}, \idist)$, let~$x^* = \{x^*(s,a)\}_{s\in\mathcal{S},a\in\mathcal{A}}$ be the optimal occupancy measure and let~$R^*$ be the revenue corresponding to~$x^*$. Define~$\mathcal{S}_{goal}$ to be the set of real goal states and $\mathcal{S}_{decoy}$ to be the set of decoy goal states. 
When using the policy~$\policy_d$ obtained by solving Optimization Problem~\ref{op:modified_equivocal_optimization_problem}, the loss in revenue in the sense of Definition~\ref{def:revenue_loss} is bounded via
\begin{equation}\label{eq:equ_bound}
\begin{aligned}
    L_{\policy_d} \leq \frac{\beta}{R^*}\left(\sum_{s\in\mathcal{S}_{goal}}\sum_{a\in\mathcal{A}}x^*(s, a) - \sum_{s\in\mathcal{S}_{decoy}}\sum_{a\in\mathcal{A}}x^*(s, a)\right)^2,
\end{aligned} 
\end{equation}
where~$\beta > 0$ is the deception parameter in Optimization Problem~\ref{op:modified_equivocal_optimization_problem}.
\end{theorem}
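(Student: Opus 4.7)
The plan is to exploit the fact that Optimization Problem~\ref{op:modified_equivocal_optimization_problem} and Optimization Problem~\ref{op:dualLP_with_task_constraint} share the same feasible region, so $x^*$ itself is a feasible (though typically suboptimal) point for Optimization Problem~\ref{op:modified_equivocal_optimization_problem}. This makes the proof a direct optimality comparison: the value of the penalized objective evaluated at the deceptive solution $x_d$ must weakly dominate the value evaluated at $x^*$, and rearranging this inequality yields the claimed bound.

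First I would explicitly write $x_d = \{x_{\pi_d}(s,a)\}$ for the occupancy measure corresponding to $\pi_d$ from Optimization Problem~\ref{op:modified_equivocal_optimization_problem}, and introduce the shorthand
\begin{equation}
\Delta(x) = \sum_{s \in \mathcal{S}_{goal}}\sum_{a \in \mathcal{A}} x(s,a) - \sum_{s \in \mathcal{S}_{decoy}}\sum_{a \in \mathcal{A}} x(s,a),
\end{equation}
so that the objective in Optimization Problem~\ref{op:modified_equivocal_optimization_problem} reads $\sum_{s,a} r(s,a) x(s,a) - \beta \Delta(x)^2$. Since $x^*$ is feasible for Optimization Problem~\ref{op:modified_equivocal_optimization_problem} and $x_d$ is its maximizer, the optimality inequality gives
\begin{equation}
R_{\policy_d} - \beta \Delta(x_d)^2 \;\geq\; R^* - \beta \Delta(x^*)^2,
\end{equation}
where I invoke~\eqref{eq:revenue} to identify $\sum_{s,a} r(s,a) x_{\pi_d}(s,a)$ with $R_{\policy_d}$ and similarly for $R^*$.

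Rearranging this inequality yields $R^* - R_{\policy_d} \leq \beta\big(\Delta(x^*)^2 - \Delta(x_d)^2\big)$. Because $\beta \Delta(x_d)^2 \geq 0$, I can simply drop that nonnegative term to get $R^* - R_{\policy_d} \leq \beta \Delta(x^*)^2$. Dividing both sides by $R^*$ and applying Definition~\ref{def:revenue_loss} produces exactly the bound~\eqref{eq:equ_bound}.

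There is no real obstacle here beyond bookkeeping: the only subtlety to verify is that $x^*$ is indeed feasible for Optimization Problem~\ref{op:modified_equivocal_optimization_problem}, which follows immediately because that problem is obtained from Optimization Problem~\ref{op:dualLP_with_task_constraint} by moving the decoy equality constraint into the objective as a penalty (leaving the Bellman-flow constraint, the reachability constraint, and the nonnegativity constraint unchanged). Unlike the diversionary and targeted cases (Theorems~\ref{thm:bound_div} and~\ref{thm:bound_tar}), no worst-case envelope involving $(1-\gamma)^{-1}$ or $\max_{s,a} x(s,a)$ is needed, because the penalty depends only on the already-summed occupancies on $\mathcal{S}_{goal}$ and $\mathcal{S}_{decoy}$, so evaluating it at $x^*$ directly produces the bound without any further estimation.
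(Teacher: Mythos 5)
Your proposal is correct and follows essentially the same argument as the paper's proof: both exploit that $x^*$ remains feasible for Optimization Problem~\ref{op:modified_equivocal_optimization_problem}, compare objective values at $x_d$ and $x^*$, drop the nonnegative term $\beta\Delta(x_d)^2$, and divide by $R^*$. The paper's intermediate step of subtracting the constant $\sum_{s,a} r(s,a)x^*(s,a)$ from the objective is purely cosmetic and your direct optimality inequality is equivalent.
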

\begin{proof}
See Appendix~\ref{prf:bound_equ}.
\end{proof}
Theorem~\ref{thm:bound_equ} provides a guideline for selecting~$\beta$ by establishing the linear dependence of the worst-case revenue loss on~$\beta$.
According to Theorem~\ref{thm:bound_equ}, the worst-case revenue loss decreases as the optimum~$\{x^*(s,a)\}_{s\in\mathcal{S}, a\in \mathcal{A}}$ takes closer values 
on the sets~$\mathcal{S}_{goal}$ and~$\mathcal{S}_{decoy}$. If the difference in visitation between~$\mathcal{S}_{goal}$ and~$\mathcal{S}_{decoy}$ is zero for~$x^*$, 
then the performance loss is guaranteed to be zero. Problems that satisfy this condition are inherently amenable to
implementing equivocal deception. 

Theorems~\ref{thm:bound_div}, \ref{thm:bound_tar}, and~\ref{thm:bound_equ} together solve Problem~\ref{prob:performance_loss}. 
\section{Numerical Results} \label{sec:result}
In this section, we solve Problem~\ref{prob:validation}.
%empirically analyze the ability of the proposed optimization problems to synthesize deceptive policies in the sense of Definitions~\ref{def:diversionary_deception} and~\ref{def:targeted_deception}.
We use our techniques for deceptive policy optimization 
in the setting of the moving target defense (MTD) problem from~\cite{zheng2019markov}, which can be generalized to real-world network defense problems, such as DDoS defense~\cite{WANG201410, 6614155} and protecting smart grid~\cite{6175633} or vehicle networks~\cite{8424662}.
By extending the problem to a multi-agent setting, the objective is to hide ``real agent(s)'' whose task must be completed 
among a collection of ``decoy agents''
that are part of the system only to aid in the implementation
of deception. We seek to conceal the real agent(s)
while maintaining strong performance in the sense of still maximizing revenue as much as possible. 
To that end, we empirically 
evaluate the extent to which the policies obtained from Optimization Problems~\ref{op:diversionary_optimization_problem}, ~\ref{op:targeted_optimization_problem}, and~\ref{op:modified_equivocal_optimization_problem} achieve both deception and strong performance simultaneously. 
% \textcolor{blue}{This example can be generalized to infrastructures that need to conceal their critical components, or to multi-robot coordination where some robots have critical roles.}\Alex{This feels like it needs a citation maybe?}
\footnote{The software and data used to generate these results is available at: \href{https://anonymous.4open.science/r/DeceptionMTD-2980}{https://anonymous.4open.science/r/DeceptionMTD-2980}.}.

%\Alex{This is copied out of the MILCOM paper, update and change some of the text.} 
We use the MMDP~$\mathcal{M} = (\mathcal{S}, \mathcal{A}, r, \gamma, \mathcal{T}, \idist)$ to model the system, where there are~$n$ real agents and~$m$ decoy agents in a network.
Each agent has the state space~$\mathcal{S}^i = \{\texttt{N}, \texttt{T}, \texttt{E}, \texttt{B}\}$ and action space~$\mathcal{A}^i = \{\texttt{wait}, \texttt{defend}, \texttt{reset}\}$.
The states \texttt{N}, \texttt{T}, \texttt{E}, and \texttt{B} corresponds to the system conditions ``normal'', ``targeted'', ``exploited'', and ``breached'', respectively,
indicating a degree of compromise by an adversary. The state ``normal''  indicates the absence of any security threats, ``targeted'' signifies that the agent is under attack but not yet compromised, ``exploited''  refers to a state where vulnerabilities 
in the agent
have been used by an adversary, 
and ``breached'' indicates a full compromise of the agent
by an adversary.

\begin{figure}[ht]
\begin{center}
\includegraphics[width=4.0 in]{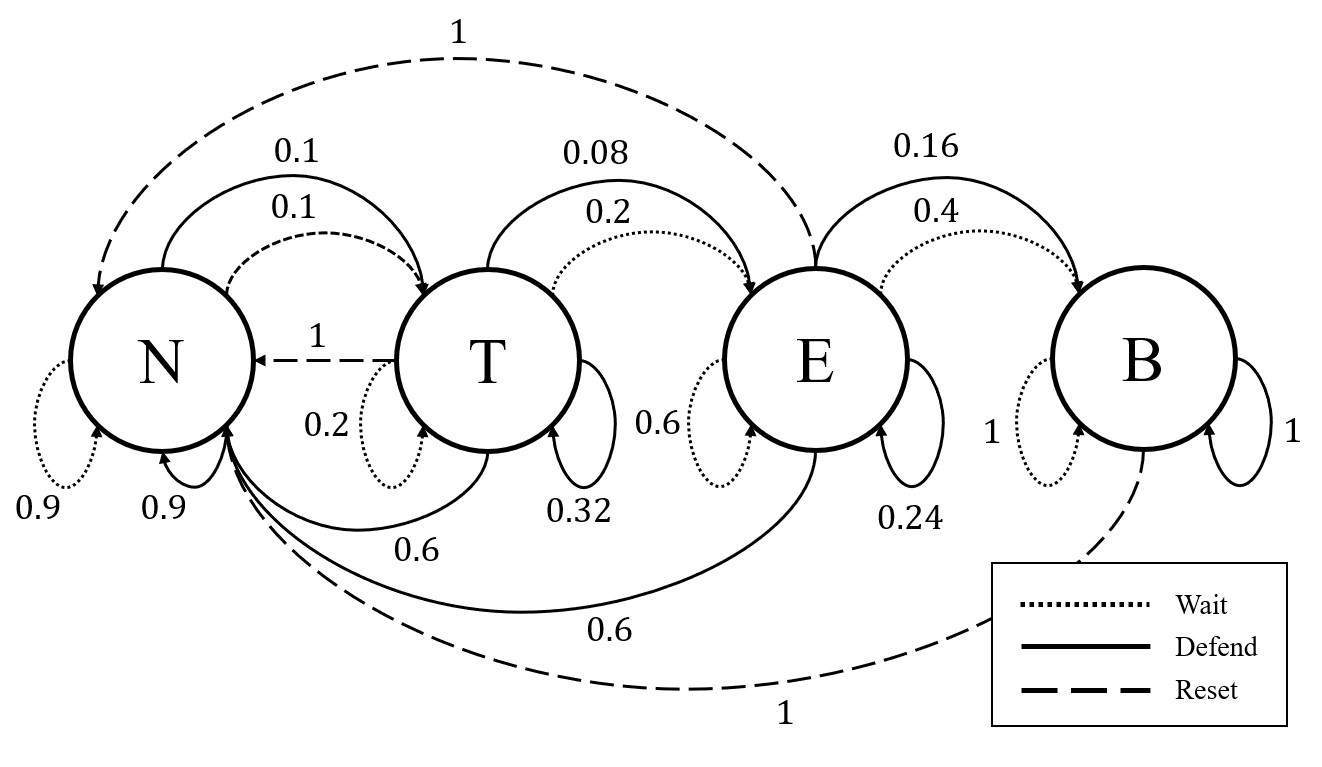}
\end{center}
%\vspace{-1 in}
\caption{Transition probabilities of each agent in the MTD problem. }
\label{fig:MTD_transition_probability}
\end{figure}

\begin{table}[ht]
% The table caption is above the table.
\caption{Rewards for a real agent at each state-action pair in the MTD problem.}
\vspace{.1 in}
\centering
\begin{tabular}{c|lll}\toprule
        State & \texttt{wait} & \texttt{defend} & \texttt{reset}\\ \midrule
        \texttt{N} & $R$ & $R-C_D$ & $R - C_R$\\
        \texttt{T} & $R-C_T$ & $R-C_D -C_T$ & $R - C_R$\\
        \texttt{E} & $R-C_E$ & $R-C_D -C_E$ & $R - C_R$\\
        \texttt{B} & $R-C_B$ & $R-C_D -C_B$& $R - C_R$ \\ \bottomrule
\end{tabular}
\label{tab:rewards}
\end{table}

All agents start in state \texttt{N}. The transition probabilities of each agent are defined as in Figure~\ref{fig:MTD_transition_probability}~\cite{kim2024defining}. 
%Figure~\ref{fig:transition} shows the transition dynamics,~$\mathcal{T}^i$, of each transmitter, when each action is taken in each state.
Table~\ref{tab:rewards} lists the reward values that $r^i$ takes for real agent~$i$
in each state-action pair. To reflect that decoy agents do not play a significant role in the 
performance of the 
system, their rewards are set to~$1\%$ of the reward values of the real agents
that are shown in Table~\ref{tab:rewards}. 

We adopt parameters in Table~\ref{tab:rewards} from~\cite{zheng2019markov}. 
We use the baseline reward~$R = 10$ that is earned at each timestep,
along with various costs (which are negative rewards): $C_T = 0.1$ for being targeted (occupying state \texttt{T}),~$C_E = 3$ for being exploited (occupying state \texttt{E}), $C_B = 4$ for being breached (occupying state \texttt{B}), $C_D = 5$ for taking the \texttt{defend} action, and~$C_R = 20$ for taking the \texttt{reset} action.

We use three types of IRL to model the adversary: 
AL, MaxEnt IRL, and Deep IRL. The objective of the adversary is to identify the real agent(s) among the full collection of agents. After estimating the agents' rewards using IRL, 
the adversary infers the real agent(s) as the agent(s) that have
higher reward in state ``\texttt{N}" and lower reward in state ``\texttt{B}".
If the adversary makes an incorrect estimate of the 
system's 
rewards, 
then
they will infer incorrect goal states, 
which will result in 
incorrect inferences of the identities of 
the real agent(s).
Hence, to conceal the real agent from the adversary, we compute deceptive policies to influence the reward estimated by IRL.

From the adversary's perspective, they infer the real agent based on the likelihood of each agent being the real agent, computed as
\begin{equation}\label{eq:likelihood}
    L^i = \frac{\exp{(D^i/\lambda)}}{\sum_{i=1}^{n+m} \exp{(D^i/\lambda)}},
\end{equation}
with some~$D^i$ that measures the importance of agent~$i$, and some temperature parameter~$\lambda$.
For IRL's estimate of the reward function~$\estReward$,
we calculate~$D^i$ by leveraging the deception metric proposed in~\cite{kim2024defining}, namely
\begin{equation}
D^{i} = {\sum_{y^i\in\mathcal{S}^i}w(y^i)\estTotalReward^i(y^i)},
\end{equation}
for $i \in \{1, \dots, n+m\}$,
where~$w(y^i)\geq0$ are weights for each local state~$y^i$ 
and where 
\begin{equation}
\estTotalReward^i(y^i)=\sum_{\jointState=(s^1,\dots,s^i=y^i,\dots,s^{n+m})\in\jointStateSpace}
\sum_{\jointAction\in\jointActionSpace}\estReward(\jointState,\jointAction)
\end{equation}
is the estimated marginal reward for agent $i$ at local state~$y^i$. 
 For weights of~$D^i$, we used $w(\texttt{N}) = 1$, $w(\texttt{T}) = 0.75$, $w(\texttt{E}) = 0.5$, and $w(\texttt{B}) = 0.25$, reflecting that each agent may be considered important if it has (i) higher rewards in state \texttt{N} and (ii) lower rewards in state \texttt{B}. 
Intuitively, this metric represents the estimated importance of each agent based on the estimated rewards, where states with higher rewards are considered more important. 
A larger value of $D^i$ will result in a larger value of $L^i$, and the adversary identifies the real agent as the one with the largest value of~$L^i$.

In this example, we consider 5 agents, where the number of real and decoy agents varies in each scenario.
We computed deceptive policies by solving Optimization Problems~\ref{op:diversionary_optimization_problem}, ~\ref{op:targeted_optimization_problem}, and~\ref{op:modified_equivocal_optimization_problem}, each with $v_{reach} = 7$, which are formulated over~$12^5 = 248,832$ joint state-action pairs.
Building on Definitions~\ref{def:diversionary_deception}, ~\ref{def:targeted_deception}, and~\ref{def:equivocal_deception} in the context of this problem, we say that a policy is diversionary deceptive if the adversary misidentifies the real agent in any way, 
it is
targeted deceptive if the adversary infers that the particular 
choice of decoy agent is the real agent,
and it is equivocal deceptive if the adversary infers that the specific decoy agent is as important as the real agent.
In the next two subsections, we demonstrate the deceptiveness and performance of policies obtained from Optimization Problems~\ref{op:diversionary_optimization_problem},~\ref{op:targeted_optimization_problem}, and~\ref{op:modified_equivocal_optimization_problem}, then validate Theorems~\ref{thm:bound_div},~\ref{thm:bound_tar}, and~\ref{thm:bound_equ} by comparing empirically obtained revenue values
to theoretically predicted values. 

%%%%%%%%%%%%%%%%%%%%%%%%%%%%%%%%%%%%%%%%%%%%%%%%%%%%%%

\subsection{Deceptiveness}
In this section, we evaluate the deceptiveness of the deceptive policy $\pi_d$ obtained by solving Optimization Problem~\ref{op:diversionary_optimization_problem},~\ref{op:targeted_optimization_problem}, or~\ref{op:modified_equivocal_optimization_problem} in adversarial settings.
In the context of MTD and IRL, the adversary infers that the agent with the maximum $L^i$ value is the real agent.
To measure the results of deception, we define the adversary's utility under deception. 
% The adversary observes the system for a sufficient amount of time then estimates rewards using IRL. Based on the estimated rewards, the adversary calculates~$L_i$ values over all agents according to~\eqref{eq:likelihood} and 
After calculating~$L^i$ based on the estimated rewards, the adversary attacks the agent that has the maximum value of~$L^i$. As a result, the adversary gets utility $1$ if it attacked the true target, gets utility $-1$ if attacked the ``target decoy agent" we selected, and gets utility $0$ otherwise.
In Table~\ref{tab:diversionary},~\ref{tab:targeted}, and~\ref{tab:equivocal}, we evaluate this utility of the adversary after 100 runs with different values of the parameter~$\beta$.
In Figure~\ref{fig:diversionary},~\ref{fig:targeted} and~\ref{fig:equivocal}, we show the adversary's estimates of~$L^i$'s using three different IRL algorithms: AL, MaxEnt IRL, and Deep IRL.

\subsubsection{Diversionary Deception}
From Definition~\ref{def:diversionary_deception}, a policy that is diversionary deceptive seeks to lead an adversary to infer that any non-goal state is a goal state when attempting to learn the rewards of a system.
In the context of MTD, we seek to make the adversary infer that any decoy agent is the real agent.
%This can be achieved by misleading the adversary’s inference process, thereby causing them to make an incorrect inference about the real agent.
We therefore want any decoy agent to yield the highest~$L^i$. 
To verify this goal, we considered 2 scenarios that have different ratios of real and decoy agents, where we have one real and four agents decoy in scenario 1, and two real and three decoy agents in scenario 2.

\begin{figure}
    \centering
    \includegraphics[width=01\linewidth]{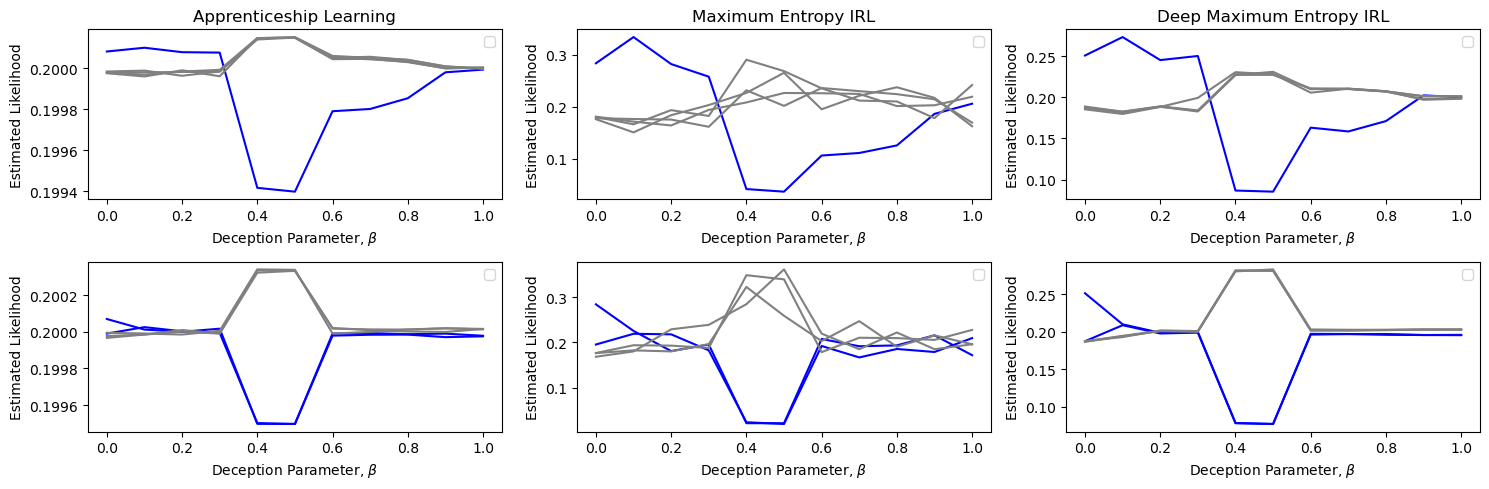}
    \caption{The adversary's estimate of~$L^i$ of each agent 
    under diversionary deception with three types of IRL. Each plot shows results averaged over~$100$ runs. We used $\textit{number of trajectories} = 500$ and $\textit{length of trajectories} = 500$ for all simulations, $\textit{epoch} = 20$, $\textit{learning rate} = 0.5$ for MaxEnt IRL, and $\textit{epoch} = 50$, $\textit{learning rate} = 0.1$ and two fully connected hidden layers with dimension $64$ and $32$ for Deep IRL. The blue line(s) represents the real agent(s), while the gray lines represent decoy agents.
    When~$\beta = 0$, no deception is implemented, and the adversary correctly infers that the real agent(s) is the most important. However, with~$\beta \geq 0.4$, the adversary's inference of the most important agent will be one of the decoy agents in average, confirming that the policy satisfies the definition of diversionary deception.}
    \label{fig:diversionary}
\end{figure}
\begin{table}
    \centering
    \begin{tabular}{c|c|c|cccccccccc}
    \hline \hline
          \multicolumn{2}{|c|}{Deception Parameter $\beta$}& 0&0.1&0.2&0.3&0.4&0.5&0.6&0.7&0.8&0.9&1.0 \\
         \hline
         \multirow{6}{4.5em}{Scenario 1} 
         & Real &100&100&100&100& 0&  0&  5&  2&  0& 42&22\\
         & Decoy 1&0&0&0&0& 34&  17&  26&  18&  18& 30&28\\
         & Decoy 2&0&0&0&0& 21&  13&  22&  17&  22& 5&12\\
         & Decoy 3&0&0&0&0& 20&  8&  23&  30&  31& 3&8\\
         & Decoy 4&0&0&0&0& 25&  62&  24&  33&  29& 20&30\\
         \cline{2-13}
         & \textbf{Adversary's Utility}&\textbf{100}&\textbf{100}&\textbf{100}&\textbf{100}& \textbf{0}&  \textbf{0}&  \textbf{5}&  \textbf{2}&  \textbf{0}& \textbf{42}&\textbf{22}\\
         \hline
         \multirow{6}{4.5em}{Scenario 2} 
         & Real 1 &100&45&19&16& 0&  0&  16&  7&  3& 1&1\\
         & Real 2&0&47&14&18& 0&  0&  12&  7&  2& 3&3\\
         & Decoy 1&0&2&23&25& 35&  36&  31&  41&  44& 33&32\\
         & Decoy 2&0&3&21&20& 38&  28&  20&  26&  35& 34&34\\
         & Decoy 3&0&3&23&21& 27&  36&  21&  19&  16& 29&30\\
         \cline{2-13}
         & \textbf{Adversary's Utility}& \textbf{100}& \textbf{92}&\textbf{33}& \textbf{34}&  \textbf{0}&   \textbf{0}& \textbf{28}& \textbf{14}&  \textbf{5}&   \textbf{4}&  \textbf{4}\\
         \hline\hline
    \end{tabular}
    \caption{Adversary's estimation of the real agent and total utility when using Deep IRL under diversionary deception}
    \label{tab:diversionary}
\end{table}

Figure~\ref{fig:diversionary} shows that this goal is attained in both scenarios for larger values of the deception parameter~$\beta$. 
With~$\beta \geq 0.4$, we see that the real agent has a lower likelihood than at least one of the decoy agents when using any of the considered IRL algorithms, implying that there exists a level of deception capable of simultaneously deceiving all of the tested algorithms.
Table~\ref{tab:diversionary} shows the estimation of the real agent and the resulting utility when its using Deep IRL.
In both scenarios, with $\beta = 0$ (when there is no deception), the adversary always find out the real agent(s) and successfully attacks them. However, with $\beta \simeq 0.4$, the adversary never attacks the real agent(s), while larger values of $\beta$ tends to show mixed likelihood, making all agents indistinguishable in their importance. 
This implementation provides diversionary deception with $\beta \geq 0.4$ by ensuring that the adversary draws an incorrect conclusion about the identity of the real agent(s).

\subsubsection{Targeted Deception}

From Definition~\ref{def:targeted_deception}, a policy that is targeted deceptive seeks to lead an adversary to a specific, user-specified false belief. In the context of MTD, we seek to lead an adversary to infer that one of the target decoy agents is the real agent. 
 To verify this goal, we consider 2 scenarios that have different ratios of real and target decoy agent(s), where we have one real and one target decoy agents in scenario 1, and one real and two target decoy agents in scenario 2.
To achieve targeted deception, we aim to manipulate the occupancy measure $\{x(s,a)\}_{s\in\mathcal{S},a\in \mathcal{A}}$ to influence the metric~$L^i$ so that the largest value of $L^i$ is obtained with the target decoy agent(s).

\begin{figure}
    \centering
    \includegraphics[width=1\linewidth]{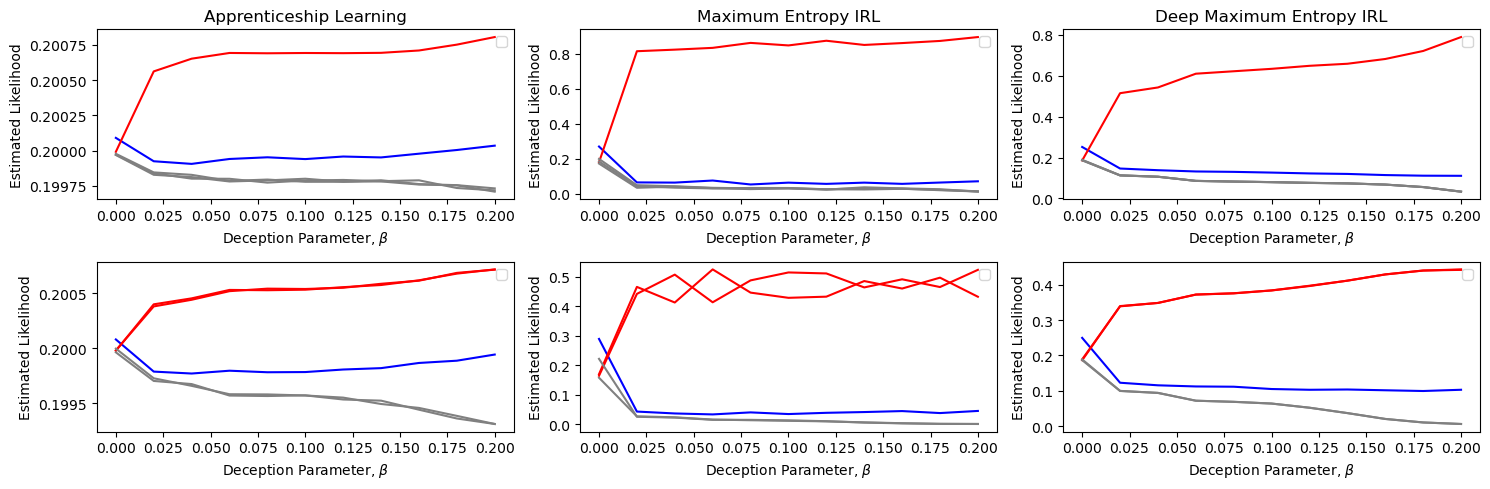}
    \caption{The adversary's estimate of~$L^i$ of each agent 
    under targeted deception with three types of IRL. Each plot shows results averaged over~$100$ runs.
    We used $\textit{number of trajectories} = 500$ and $\textit{length of trajectories} = 500$ for all simulations, $\textit{epoch} = 20$, $\textit{learning rate} = 0.5$ for MaxEnt IRL, and $\textit{epoch} = 50$, $\textit{learning rate} = 0.1$ and two fully connected hidden layers with dimension $64$ and $32$ for Deep IRL.
    The blue line represents the real agent, the red line(s) represents the target decoy agent(s), and the gray lines represent decoy agents.
    When~$\beta = 0$, no deception is implemented, and the adversary correctly
    infers that the real agent is the most important. 
    With~$\beta\geq 0.02$ the likelihood of the target decoy agent(s) is greater than that of the real agent, implying that deception misleads the adversary into believing that the target decoy agent(s) is the real one. Thus, the policy satisfies the definition of targeted deception.}
    \label{fig:targeted}
\end{figure}
\begin{table}
    \centering
    \begin{tabular}{c|c|c|cccccccccc}
    \hline \hline
         \multicolumn{2}{|c|}{Deception Parameter $\beta$} &0&0.02&0.04&0.06&0.08&0.1&0.12&0.14&0.16&0.18&0.2 \\
         \hline
         \multirow{6}{4.5em}{Scenario 1} 
         & Real &99 & 0& 0& 0& 0&  0&  0&  0&  0& 0&0\\
         & Target decoy &0 &100&100 &100& 100&  100&  100&  100&  100& 100&100\\
         & Decoy 1&0 & 0& 0& 0& 0&  0&  0&  0&  0& 0&0\\
         & Decoy 2&1& 0& 0& 0& 0&  0&  0&  0&  0& 0&0\\
         & Decoy 3&0 & 0& 0& 0& 0&  0&  0&  0&  0& 0&0\\
         \cline{2-13}
         & \textbf{Adversary's Utility}&\textbf{99}&\textbf{-100}& \textbf{-100}& \textbf{-100}& \textbf{-100}& \textbf{-100}& \textbf{-100}& \textbf{-100}& \textbf{-100}& \textbf{-100}&\textbf{-100}\\
         \hline
         \multirow{6}{4.5em}{Scenario 2} 
         & Real &100 & 0& 0& 0& 0&  0&  0&  0&  0& 0&0\\
         & Target decoy 1 &0&47& 50&52& 54&  54&  58&  55&  53& 46&56\\
         & Target decoy 2&0&53 &50&48& 46&  46&  42&  45&  47& 54&44\\
         & Decoy 1&0& 0& 0& 0& 0&  0&  0&  0&  0& 0&0\\
         & Decoy 2&0 & 0& 0& 0& 0&  0&  0&  0&  0& 0&0\\
         \cline{2-13}
         & \textbf{Adversary's Utility}& \textbf{100}& \textbf{-100}& \textbf{-100}& \textbf{-100}& \textbf{-100}& \textbf{-100}& \textbf{-100}& \textbf{-100}& \textbf{-100}& \textbf{-100}&\textbf{-100}\\
         \hline\hline
    \end{tabular}
    \caption{Adversary's estimation of the real agent and total utility when using Deep IRL under targeted deception}
    \label{tab:targeted}
\end{table}

Figure~\ref{fig:targeted} shows that targeted deception is attained for larger values of~$\beta$. 
With~$\beta\geq0.02$, the target decoy agent(s) has a larger~$L^i$ than the real agent, under all three types of IRL. This indicates that the adversary incorrectly infers that the target decoy agent(s) is the real agent.
Table~\ref{tab:targeted} shows that the adversary's estimation of the real agent is always one of the target decoy agents with $\beta\geq0.02$, resulting in a substantial decrease in the adversary's utility.
This implementation provides targeted deception by ensuring that the adversary draws a specific incorrect conclusion about the identity of the real agent.

\subsubsection{Equivocal Deception} 
From Definition~\ref{def:equivocal_deception}, a policy that is equivocal deceptive seeks to lead an adversary to a belief that two states are equally likely to be goal states. In the context of MTD, we seek to lead an adversary to infer that the real agent(s) and the target decoy agent(s) are equally important.
To verify this goal, we consider 2 scenarios with different ratios of real and decoy agents, where we have one real and one target decoy agents in scenario 1, and one real and two target decoy agents in scenario 2.
To achieve equivocal deception, we aim to  manipulate the occupancy measure $\{x(s,a)\}_{s\in\mathcal{S},a\in \mathcal{A}}$ to influence the metric~$L^i$ so that~$L^i$ of the real and target decoy agents have similar values.

\begin{figure}
    \centering
    \includegraphics[width=1\linewidth]{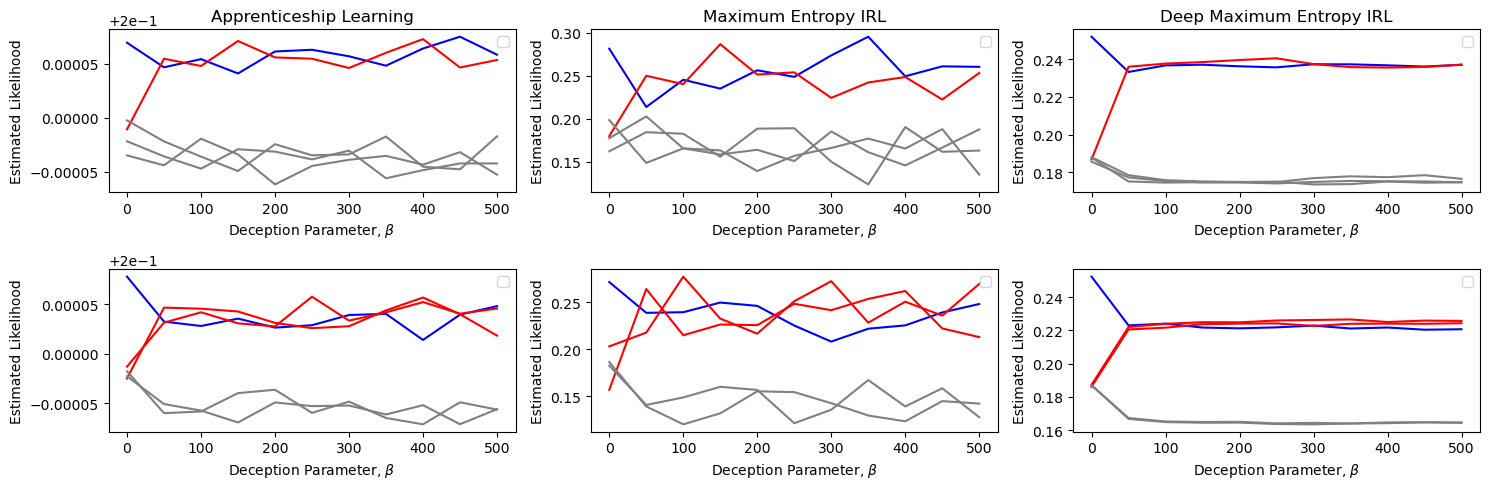}
    \caption{The adversary's estimate of~$L^i$ of each agent 
    under equivocal deception with three types of IRL. Each plot shows results averaged over~$100$ runs. 
    We used $\textit{number of trajectories} = 500$ and $\textit{length of trajectories} = 500$ for all simulations, $\textit{epoch} = 20$, $\textit{learning rate} = 0.5$ for MaxEnt IRL, and $\textit{epoch} = 50$, $\textit{learning rate} = 0.1$ and two fully connected hidden layers with dimension $64$ and $32$ for Deep IRL.
    The blue line represents the real agent, the red line(s) represents the target decoy agent(s), and the gray lines represent decoy agents.
    When~$\beta = 0$, no deception is implemented, and the adversary correctly
    infers that the real agent is the most important. 
    With~$\beta\geq 50$ the likelihoods of the real agent and the target decoy agent(s) remain close to each other, which satisfies the definition of equivocal deception.}
    \label{fig:equivocal}
\end{figure}
\begin{table}
    \centering
    \begin{tabular}{c|c|c|cccccccccc}
    \hline \hline
         \multicolumn{2}{|c|}{Deception Parameter $\beta$} &0&50&100&150&200&250&300&350&400&450&500 \\
         \hline
         \multirow{6}{4.5em}{Scenario 1} 
         & Real&99&36& 44& 36& 26& 19& 50& 56& 58& 45& 45\\
         & Target decoy&0 &64& 56& 64& 74& 81& 50& 44& 42& 55& 55\\
         & Decoy 1&0 & 0& 0& 0& 0&  0&  0&  0&  0& 0&0\\
         & Decoy 2&0& 0& 0& 0& 0&  0&  0&  0&  0& 0&0\\
         & Decoy 3&1 & 0& 0& 0& 0&  0&  0&  0&  0& 0&0\\
         \cline{2-13}
         & \textbf{Adversary's Utility}&\textbf{99}& \textbf{-28}& \textbf{-12}& \textbf{-28}& \textbf{-48}& \textbf{-62}&   \textbf{0}&  \textbf{12}&  \textbf{16}& \textbf{-10}& \textbf{-10}\\
         \hline
         \multirow{6}{4.5em}{Scenario 2} 
         & Real &100&  34&  38&  18&  10&  13&  27&  12&  15&   6&   6\\
         & Target decoy 1&0& 26&  25&  41&  47&  36&  19&  28&  41&  33&  32\\
         & Target decoy 2&0&40&  37&  41&  43&  51&  54&  60&  44&  61&  62\\
         & Decoy 1&0& 0& 0& 0& 0&  0&  0&  0&  0& 0&0\\
         & Decoy 2&0 & 0& 0& 0& 0&  0&  0&  0&  0& 0&0\\
         \cline{2-13}
         & \textbf{Adversary's Utility}&\textbf{100}& \textbf{-32}& \textbf{-24}& \textbf{-64}& \textbf{-80}& \textbf{-74}& \textbf{-46}& \textbf{-76}& \textbf{-70}& \textbf{-88}& \textbf{-88}\\
         \hline\hline
    \end{tabular}
    \caption{Adversary's estimation of the real agent and total utility when using Deep IRL under equivocal deception}
    \label{tab:equivocal}
\end{table}

Figure~\ref{fig:equivocal} shows that equivocal deception is achieved with larger values of~$\beta$. While Apprenticeship learning and MaxEnt IRL shows some fluctuation in their estimation, in general, with~$\beta\geq50$, the likelihoods of the real agent and all of the target decoy agent(s) are closely aligned, while the other decoy agents remain unimportant.
% Notably, the likelihoods of \texttt{Agent0} and \texttt{Agent1} estimated by Deep IRL are indistinguishable with~$\beta \geq 300$.
Table~\ref{tab:equivocal} shows that the adversary attacks the real agent less often with $\beta\geq 50$ compared to the case with no deception, resulting in a decrease in the adversary's utility. 
This setup provides equivocal deception by 
deceiving an adversary into believing there is no difference
in importance between the real and the target decoy agents. 

% \begin{figure}
%     \centering
%     \includegraphics[width=1\linewidth]{OPRE/figures/4_agent_deceptiveness.png}
%     \caption{The adversary's estimate of the real agents 
%     under three types of deception and with three types of IRL
%     used by the adversary. Each plot shows results averaged over~$50$ runs. 
%     When~$\beta = 0$, 
%     no deception is implemented, and the adversary correctly
%     infers that the real agent is the most important. 
%     In diversionary deception, with~$\beta \geq 0.3$,
%     the adversary's inference of the most important agent will be one of the decoy agents, confirming that the policy satisfies the definition of diversionary deception. In targeted deception, with~$\beta\geq 0.2$ the likelihood of the target decoy agent is greater than that of the real agent, implying that deception misleads  
%     the adversary into believing that the target decoy agent is the real one. Thus, the policy satisfies the definition of targeted deception. In equivocal deception, with~$\beta\geq 300$ the likelihoods of the real agent and the chosen decoy agent remain close to each other, which satisfies the definition of equivocal deception.}
%     \label{fig:opre_deceptiveness}
% \end{figure}

\subsection{Performance}
In this section, we evaluate the performance of the system when deception is implemented with a policy $\pi_d$ obtained from Optimization Problem~\ref{op:diversionary_optimization_problem},~\ref{op:targeted_optimization_problem}, or~\ref{op:modified_equivocal_optimization_problem}.
We quantify the revenue lost by introducing each type of deception and verify the bounds on the revenue loss given in Theorems~\ref{thm:bound_div},~\ref{thm:bound_tar}, and~\ref{thm:bound_equ}.

As shown in Figure~\ref{fig:performance}, all three types of deception can be achieved with only small losses in revenue.
In diversionary deception, with the choice of~$\beta=0.4$, we still recover~$98.04\%$ and~$99.07\%$ of the optimal revenue, which is less than a~{$2\%$} reduction compared to the revenue without deception.
Similarly, in targeted deception with~$\beta=0.02$ we find that the deceptive policy
recovers~{$99.95\%$} and~$99.90\%$ of the optimal revenue, and thus the loss in revenue in this case is less than~{$1\%$} compared to the revenue without deception.
In equivocal deception, with the choice of~{$\beta=50$} we observe that the deceptive policy attains~{$99.99\%$} and~$99.98\%$ of the optimal revenue, corresponding to a revenue loss of less than~{$1\%$} compared to the revenue without deception. Under each form of deception, we find that at values of~$\beta$ that generate deceptive policies, the resulting policies incur minimal losses in revenue, highlighting that we can achieve 
both deception and high performance simultaneously. 
%This result highlights that deception can be achieved with limited loss in reward. 
%In other words, the desired deceptive benefits are gained and they incur a loss of less than $4\%$ of the revenue that would be attained without deception.

\begin{figure}
    \centering
    \includegraphics[width=1\linewidth]{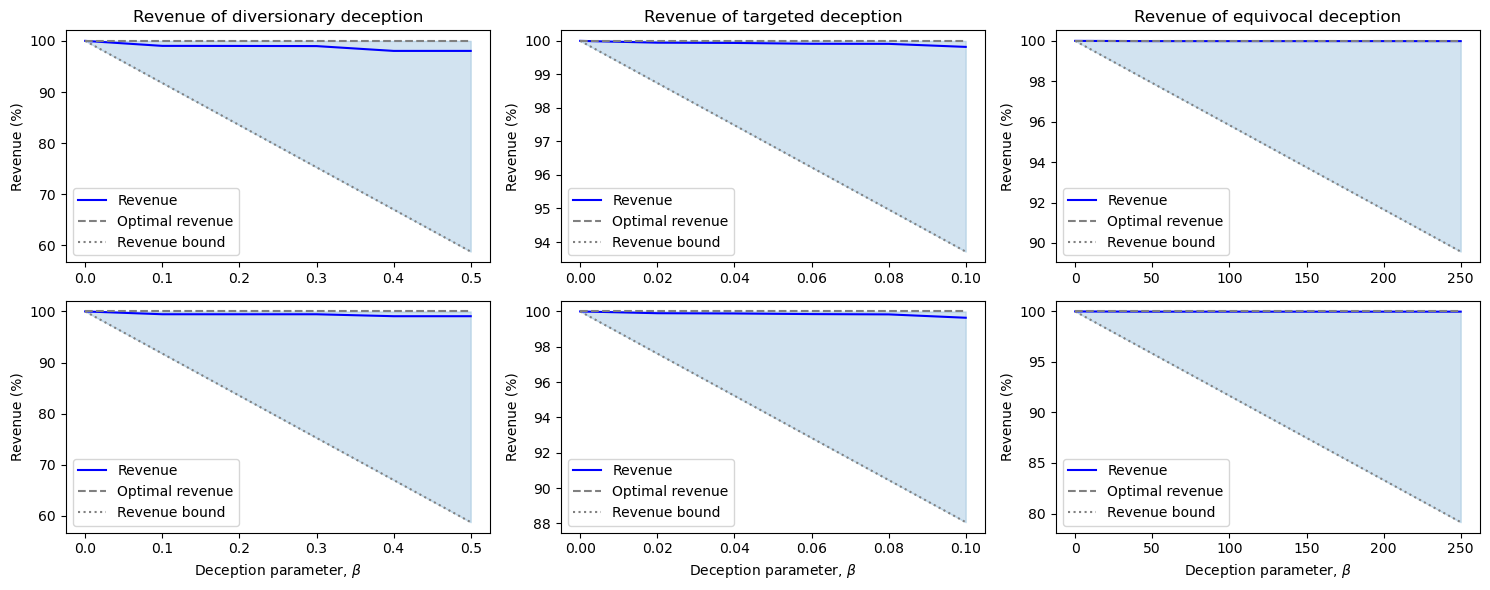}
    \caption{The revenue obtained from deceptive policies under diversionary, targeted, and equivocal deception. In diversionary deception, the system recovers~$98.04\%$ and~$99.07\%$ of the optimal revenue with $\beta = 0.4$. For targeted deception with~$\beta = 0.02$, the system recovers~{$99.95\%$} and~$99.90\%$ of the optimal revenue. For equivocal deception with~$\beta = 50$, 
    the system recovers~{$99.99\%$} and~$99.98\%$ of the optimal revenue. In all types of deception, less than $2\%$ of revenue is lost, 
    which outperforms worst-case bounds on performance loss, verifying Theorem~\ref{thm:bound_div},~\ref{thm:bound_tar} and~\ref{thm:bound_equ}.
    }
    \label{fig:performance}
\end{figure}

For practical implementation, Theorems~\ref{thm:bound_div},~\ref{thm:bound_tar} and~\ref{thm:bound_equ} can provide a guide for selecting parameter $\beta$. For some scenarios where we have a maximum allowable performance loss, we can find a range of~$\beta$ that attains required performance leveraging Theorems~\ref{thm:bound_div},~\ref{thm:bound_tar} and~\ref{thm:bound_equ}. In this example, the performance loss in scenario 2 of each types of deception are bounded by 
\begin{equation}
    L_{\pi_{div}} \leq 0.82432\frac{\beta}{R^*}, \quad
    L_{\pi_{tar}} \leq 0.62866\frac{\beta}{R^*}, \quad
    L_{\pi_{equ}} \leq 0.00042\frac{\beta}{R^*}.
\end{equation}
For example, to achieve $80\%$ of it's optimal performance, we need $0\leq \beta \lesssim  0.243$ for diversionary deception, $0\leq \beta \lesssim 0.168$ for targeted deception, and $0\leq \beta \lesssim 239.7$ for equivocal deception.
This indicates that the choice of~$\beta \simeq0.243$,~$\beta\simeq0.168$ and~$\beta\simeq239.7$ can provide the most deceptive policy under allowable performance loss.

From Table~\ref{tab:diversionary},~\ref{tab:targeted}, and~\ref{tab:equivocal}, we find $\beta=0.2$ in diversionary deception (scenario 2) drops the adversary's utility from $100$ to $34$, $\beta = 0.16$ in targeted deception (scenario 2) drops it to $-100$, and $\beta=200$ in equivocal deception (scenario 2) drops it to $-80$, while $80\%$ of its optimal performance is theoretically guaranteed in all cases. However, in practice, Figure~\ref{fig:performance} shows that the actual performance outperforms these bounds on performance loss in all three types of deception.

\section{Conclusion} \label{sec:conclude}

This study demonstrates that active deception strategies can effectively mislead adversaries in fully observable systems with formal performance guarantees under a specified degree of deception.
We have developed analytic bounds for performance losses in terms of total accumulated rewards and verified the performance 
of deceptive decision policies with various scenarios
in simulations.
By ensuring reliable operation under deception, our findings enhance the practical applicability of deception in securing critical systems.
Future work will explore 
tradeoffs between optimality and 
deception under a broader range of adversarial models, including some model that can expect and react to deception, and alternate definitions of deception.

\section*{Acknowledgments}
% Details withheld to preserve blind review.
This work was supported by Defense Advanced Research Projects Agency under Grant No. HR00112420348, Office of Naval Research under Grant No. N00014-24-1-2432, National Science Foundation Graduate Research Fellowship under Grant No. DGE-2039655. Any opinions, findings and conclusions or recommendations expressed herein are those of the authors and do not necessarily reflect the views of sponsoring agencies.

\section*{Disclosure Statement}
The authors report there are no competing interests to declare.

\section*{Data Availability Statement}
The data that support the findings of this study were generated by the authors and are reproducible using the code publicly available at https://anonymous.4open.science/r/DeceptionMTD-2980. No external datasets were used.

% Appendix here
% Options are (1) APPENDIX (with or without general title) or
%             (2) APPENDICES (if it has more than one unrelated sections)
% Outcomment the appropriate case if necessary
%
% \begin{APPENDIX}{<Title of the Appendix>}
% \end{APPENDIX}
%
%   or
%
\begin{appendices}

\section{Proposition~\ref{prop:oc_sum}}\label{prop_oc}

We state the following proposition and use it in the proofs of Theorems~\ref{thm:bound_div} and~\ref{thm:bound_tar}.

\begin{proposition}\label{prop:oc_sum}
The sum of the values of the occupancy measure~$x$ across all states and actions is 
\begin{equation}\label{eq:oc_sum}
    \sum_{s \in \mathcal{S}} \sum_{a\in \mathcal{A}}x(s,a) = (1-\gamma)^{-1}.
\end{equation}
Moreover, the sum of the squares
of the occupancy measure~$x$ across all states and actions is bounded via 
\begin{equation}\label{eq:oc_square_sum}
    \frac{(1-\gamma)^{-2}}{|\mathcal{S}||\mathcal{A}|} \leq \sum_{s \in \mathcal{S}} \sum_{a\in \mathcal{A}}x(s,a)^2 \leq (1-\gamma)^{-2}.
\end{equation} 
\end{proposition}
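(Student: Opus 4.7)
The plan is to prove the two claims directly from the definition of the occupancy measure in~\eqref{eq:def_occupancy_measure}, using elementary properties of probability distributions, a geometric series, and the Cauchy--Schwarz inequality.

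For the equality in~\eqref{eq:oc_sum}, I would start by summing the definition
\begin{equation}
x(s,a) = \sum_{j\in\mathcal{S}} \idist(j)\sum_{n=1}^\infty \gamma^{n-1} P^{\policy}_{n,j}(s,a)
\end{equation}
over all $s\in\mathcal{S}$ and $a\in\mathcal{A}$, and interchange the order of summation. Since $P^{\policy}_{n,j}(\cdot,\cdot)$ is the probability distribution over state--action pairs at timestep $n$ when starting from $j$, it satisfies $\sum_{s,a} P^{\policy}_{n,j}(s,a) = 1$. The inner geometric series $\sum_{n=1}^\infty \gamma^{n-1}$ then collapses to $(1-\gamma)^{-1}$, and using $\sum_{j} \idist(j) = 1$ because $\idist \in \Delta(\mathcal{S})$ yields the claimed identity.

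For the upper bound in~\eqref{eq:oc_square_sum}, I would use nonnegativity of $x(s,a)$ together with~\eqref{eq:oc_sum}: each $x(s,a)$ is bounded above by the total sum $(1-\gamma)^{-1}$, so $x(s,a)^2 \leq x(s,a)\,(1-\gamma)^{-1}$ for every state--action pair. Summing over all $(s,a)$ and applying~\eqref{eq:oc_sum} a second time gives the bound $(1-\gamma)^{-2}$. For the lower bound, I would invoke Cauchy--Schwarz in the form
\begin{equation}
\Bigl(\sum_{s,a} x(s,a)\Bigr)^2 \leq |\mathcal{S}||\mathcal{A}| \sum_{s,a} x(s,a)^2,
\end{equation}
which, after substituting the value of $\sum_{s,a} x(s,a) = (1-\gamma)^{-1}$ from~\eqref{eq:oc_sum} and rearranging, yields the matching lower bound.

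There is no real obstacle here: both claims reduce to routine manipulations once the definition of the occupancy measure is unpacked. The only care needed is to justify the interchange of the sums (which is immediate because every term is nonnegative, so Tonelli's theorem applies) and to correctly identify $\sum_{j}\idist(j)=1$ and $\sum_{s,a} P^{\policy}_{n,j}(s,a)=1$ as the probability-distribution properties that drive the computation.
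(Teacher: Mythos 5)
Your proof is correct, but for the identity \eqref{eq:oc_sum} you take a genuinely different route from the paper: you sum the definition \eqref{eq:def_occupancy_measure} directly over all state--action pairs, interchange the (nonnegative) sums via Tonelli, and use $\sum_{s\in\mathcal{S}}\sum_{a\in\mathcal{A}} P^{\policy}_{n,j}(s,a)=1$, the geometric series, and $\sum_{j\in\mathcal{S}}\idist(j)=1$. The paper instead starts from the flow-conservation constraint of Optimization Problem~\ref{op:dualLP_with_task_constraint}, sums it over $j\in\mathcal{S}$, and then evaluates the cross term $\gamma\sum_{j}\sum_{s}\sum_{a}\mathcal{T}(s,a,j)x(s,a)$ by substituting the definition of the occupancy measure and recognizing the result as state-visitation probabilities at timestep $n+1$, arriving at $\sum_{s,a}x(s,a)=1+\gamma(1-\gamma)^{-1}=(1-\gamma)^{-1}$. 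Your argument is more elementary and self-contained; the paper's argument makes explicit that the identity is a consequence of LP feasibility, which is the form in which the deceptive solutions $x_d$ actually arise in Theorems~\ref{thm:bound_div}--\ref{thm:bound_equ}. To apply your version to those $x_d$ one invokes the one-to-one correspondence between feasible points and policy occupancy measures stated after \eqref{eq:oc_policy}; note the paper leans on the same correspondence, since its proof also substitutes the definition of the occupancy measure ``for the policy generated by $x$'', so neither proof is strictly more general than the other. For \eqref{eq:oc_square_sum} the two proofs are essentially the same in content: the paper cites the standard norm bounds $\frac{1}{\sqrt{|\mathcal{S}||\mathcal{A}|}}\norm{X}_1\leq\norm{X}_2\leq\norm{X}_1$ for the vectorized measure, while you re-derive exactly those two inequalities by hand, via Cauchy--Schwarz against the all-ones vector for the lower bound and the elementwise estimate $x(s,a)\leq(1-\gamma)^{-1}$ (valid by nonnegativity) for the upper bound.
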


\begin{proof}
Denote the feasible region of Optimization Problem~\ref{op:dualLP_with_task_constraint} by $\mathcal{X}$. Then all choices of $x \in \mathcal{X}$ satisfy the constraint
\[\sum_{a\in \mathcal{A}} x(j,a) - \gamma \sum_{s\in \mathcal{S}}\sum_{a \in \mathcal{A}} \mathcal{T}(s,a,j)x(s,a) = \alpha(j)\]
for all~$j \in \mathcal{S}$. 
By summing this equation over all $j \in \mathcal{S}$, we find 
\begin{equation}\label{eq:sum_flow_constraint}
\begin{aligned}
    \sum_{j \in \mathcal{S}} \sum_{a\in \mathcal{A}}x(j,a) - \gamma \sum_{j \in \mathcal{S}}\sum_{s\in \mathcal{S}}\sum_{a \in \mathcal{A}} \mathcal{T}(s,a,j)x(s,a) = 1.
\end{aligned}
\end{equation}
By substituting the definition of an occupancy measure from \eqref{eq:def_occupancy_measure} into the second term, we obtain
\begin{equation}
\begin{aligned}
        & \gamma \sum_{j \in \mathcal{S}}\sum_{s\in \mathcal{S}}\sum_{a \in \mathcal{A}} \mathcal{T}(s,a,j)x(s,a) \\
         = &\gamma  \sum_{j \in \mathcal{S}}\sum_{s\in \mathcal{S}}\sum_{a \in \mathcal{A}} \mathcal{T}(s,a,j) \sum_{k \in \mathcal{S}} \alpha(k) \sum_{n=1}^\infty \gamma^{n-1} P^\policy_{n,k}(s,a)\\
         = &\gamma \sum_{k \in \mathcal{S}} \alpha(k) \sum_{n=1}^\infty \gamma^{n-1}  \sum_{j \in \mathcal{S}}\sum_{s\in \mathcal{S}}\sum_{a \in \mathcal{A}} \mathcal{T}(s,a,j)P^\policy_{n,k}(s,a)
\end{aligned}
\end{equation}
for the policy $\policy$ generated by $x$, where the last equality follows from reordering the summation.
The sum of the probability $P^\policy_{n,k}(s,a)$ multiplied by the transition probability $\mathcal{T}(s,a,j)$ over all $s\in \mathcal{S}$ and $a\in\mathcal{A}$ gives the probability of being in state $j$ at timestep $n+1$, that is,
\begin{equation}
\begin{aligned}
& \gamma \sum_{k \in \mathcal{S}} \alpha(k) \sum_{n=1}^\infty \gamma^{n-1}  \sum_{j \in \mathcal{S}}\sum_{s\in \mathcal{S}}\sum_{a \in \mathcal{A}} \mathcal{T}(s,a,j)P^\policy_{n,k}(s,a)\\
& = \gamma \sum_{k \in \mathcal{S}} \alpha(k) \sum_{n=1}^\infty \gamma^{n-1} \sum_{j \in \mathcal{S}}\sum_{a\in \mathcal{A}}P^\policy_{n+1,k}(j,a). 
\end{aligned}
\end{equation}
We know that $\sum_{j \in \mathcal{S}}\sum_{a\in \mathcal{A}}P^\policy_{n+1,k}(j,a) = 1$ because this quantity is the probability of being in any state and taking any action at timestep $n+1$. Therefore
\begin{equation}\label{eq:plug_back_into}
\begin{aligned}
& \gamma \sum_{k \in \mathcal{S}} \alpha(k) \sum_{n=1}^\infty \gamma^{n-1} \sum_{j \in \mathcal{S}}\sum_{a\in \mathcal{A}}P^\policy_{n+1,k}(j,a) = \gamma \sum_{k \in \mathcal{S}} \alpha(k) \sum_{n=1}^\infty \gamma^{n-1}
    %=(\sum_{k \in \mathcal{S}} \alpha(k))(\sum_{n=1}^\infty \gamma^n)
    =\gamma(1-\gamma)^{-1},
\end{aligned}
\end{equation}
where the last equality follows from $\sum_{k\in\mathcal{S}}\idist(k) = 1$ and 
the fact that~$\gamma\in[0,1)$.
Substituting~\eqref{eq:plug_back_into} in to~\eqref{eq:sum_flow_constraint} gives
\begin{equation}
    \sum_{j \in \mathcal{S}} \sum_{a\in \mathcal{A}}x(j,a) = 1+ \gamma(1-\gamma)^{-1} = (1-\gamma)^{-1}.
\end{equation}

Next define $X \in \mathbb{R}^{|\mathcal{S}||\mathcal{A}|}$ as the vectorized form of the set of occupancy measures $\{x(s,a)\}_{s\in\mathcal{S}, a\in\mathcal{A}}$.
Since $x(s,a) \geq 0$ for all $s \in \mathcal{S}$ and $a\in \mathcal{A}$,
we find that 
\begin{equation}
    \norm{X}_1 = \sum_{s\in \mathcal{S}}\sum_{a\in \mathcal{A}} |x(s,a)| = (1-\gamma)^{-1}.
\end{equation}
From the standard bounds relating the~$1$-norm and the~$2$-norm, we have 
 $\frac{1}{\sqrt{|\mathcal{S}||\mathcal{A}|}}\norm{X}_1 \leq \norm{X}_2 \leq \norm{X}_1$, from which we find 
\begin{equation}
    \frac{(1-\gamma)^{-2}}{|\mathcal{S}||\mathcal{A}|} \leq \sum_{s \in \mathcal{S}} \sum_{a\in \mathcal{A}}x(s,a)^2 \leq (1-\gamma)^{-2},
\end{equation}
as desired.
\end{proof}

\section{Proof of Theorem~\ref{thm:bound_div}}\label{prf:bound_div}
Denote the feasible region of Optimization Problem~\ref{op:diversionary_optimization_problem} by $\mathcal{X}$. 
Then Optimization Problem~\ref{op:diversionary_optimization_problem} 
can be written as 
\begin{equation}
\begin{aligned}
\underset{ x\in \mathcal{X}}{\operatorname{maximize}} &\sum_{s\in \mathcal{S}}\sum_{a\in \mathcal{A}}\Big[r(s,a)x(s,a) + \beta\big(x(s,a) - x^*(s,a)\big)^2\Big],
\end{aligned}
\end{equation}
where $x^*$ is the solution to Optimization Problem~\ref{op:dualLP_with_task_constraint}, which does not include additional terms for deception.
Since $r(s,a)$ and $x^*(s,a)$ are fixed values for all $s \in \mathcal{S}$ and $a \in \mathcal{A}$, the solution to the problem remains identical after subtracting the constant term $\sum_{s\in \mathcal{S}}\sum_{a\in \mathcal{A}} r(s,a)x^*(s,a)$ from the objective function.
That is, we may equivalently solve the problem 
\begin{equation} \begin{aligned}
    \underset{ x\in \mathcal{X}}{\operatorname{maximize}} &\sum_{s\in \mathcal{S}}\sum_{a\in \mathcal{A}} \Big[r(s,a)\big(x(s,a)-x^*(s,a)\big) + \beta\big(x(s,a) - x^*(s,a)\big)^2\Big].
\end{aligned} \end{equation}

We observe that 
$x^* \in \mathcal{X}$ and substituting $x = x^*$ makes the objective function $0$. 
Let $x_d$ be a solution of Optimization Problem~\ref{op:diversionary_optimization_problem}. Then $x_d$ must achieve a 
value of the objective function that is no smaller than that
attained by any other feasible solution, including $x^*$. Then
\begin{equation} \begin{aligned}
    \sum_{s\in \mathcal{S}}\sum_{a\in \mathcal{A}} \Big[ r(s,a)(x_d(s,a) - x^*(s,a))  \beta\big(x_d(s,a)-x^*(s,a)\big)^2 \Big] \geq 0.
\end{aligned} \end{equation}
Let $\policy_d$ be the policy obtained from $x_d$ by~\eqref{eq:oc_policy}.
Then, rearranging this inequality and applying~\eqref{eq:revenue} gives 
\begin{equation} \begin{aligned}
    R^* - R_{\policy_d}
    &= \sum_{s\in \mathcal{S}}\sum_{a\in \mathcal{A}} r(s,a)(x^*(s,a) - x_d(s,a)) \\
    &\leq \beta \sum_{s\in \mathcal{S}}\sum_{a\in \mathcal{A}} (x^*(s,a) - x_d(s,a))^2\\
    & = \beta \sum_{s\in \mathcal{S}}\sum_{a\in \mathcal{A}} \Big(x^*(s,a)^2 - 2x^*(s,a)x_d(s,a) + x_d(s,a)^2\Big)\\
    & \leq \beta \left(\sum_{s\in \mathcal{S}}\sum_{a\in \mathcal{A}} x^*(s,a)^2  + \sum_{s\in \mathcal{S}}\sum_{a\in \mathcal{A}}x_d(s,a)^2\right),
\end{aligned} \end{equation}
where the last inequality follows from the non-negativity of occupancy measures.
Since we know that $\sum_{s\in \mathcal{S}}\sum_{a\in \mathcal{A}}x_d(s,a)^2 \leq (1-\gamma)^{-2}$ from Proposition~\ref{prop:oc_sum} and since $x^*$ is fixed, we find the bound of the revenue loss when employing the diversionary deceptive policy $\policy_d$ as
\begin{equation} \label{eq:div_loss_bound}
    L_{\policy_d}\leq \frac{\beta}{R^*}\left(\sum_{s \in\mathcal{S}}\sum_{a\in\mathcal{A}} x^*(s,a)^2 + (1-\gamma)^{-2}\right),
\end{equation}
where $L_{\policy_d}$ is from~\eqref{eq:revenue_loss}.

\section{Proof of Theorem~\ref{thm:bound_tar}}\label{prf:bound_tar}
Denote the feasible region of Optimization Problem~\ref{op:targeted_optimization_problem} by $\mathcal{X}$. Then Optimization Problem~\ref{op:targeted_optimization_problem} becomes
\begin{equation} \begin{aligned}
    \underset{x\in \mathcal{X}}{\operatorname{maximize}} \sum_{s\in \mathcal{S}}\sum_{a\in \mathcal{A}} \Big[r(s,a)x(s,a)  - \beta\big(x(s,a) - x_{tar}(s,a)\big)^2\Big],
\end{aligned} \end{equation}
where $x_{tar}$ is a set of user-specified target occupancy measures that reflects the specific incorrect goal states we aim to 
make the adversary infer.
As with diversionary deception,~$r(s,a)$ and~$x^*(s,a)$ are fixed values for all $s \in \mathcal{S}$ and $a \in \mathcal{A}$, and the solution to the problem remains identical after subtracting the term $\sum_{s\in \mathcal{S}}\sum_{a\in \mathcal{A}} r(s,a)x^*(s,a)$ from the objective function.
That is, we may equivalently write the problem as:
\begin{equation} \begin{aligned}
    \underset{x\in \mathcal{X}}{\operatorname{maximize}} \sum_{s\in \mathcal{S}}\sum_{a\in \mathcal{A}} \Big[r(s,a)(x(s,a)-x^*(s,a))  - \beta\big(x(s,a) - x_{tar}(s,a)\big)^2\Big].
\end{aligned} \end{equation}
Recall that $x^*$ is the solution to Optimization Problem~\ref{op:dualLP_with_task_constraint}, which does not include additional terms for deception.
Notice as well that~$x^* \in \mathcal{X}$ and substituting in~$x = x^*$ makes the objective function 
take the value
$-\sum_{s\in \mathcal{S}}\sum_{a\in \mathcal{A}}\beta(x^*(s,a)-x_{tar}(s,a))^2$. 
Let~$x_d$ be a solution of Optimization Problem~\ref{op:targeted_optimization_problem}. Then~$x_d$ achieves a value 
of the objective function 
that is no smaller
than that obtained by 
any other feasible solution, including $x^*$. Then
\begin{equation} \begin{aligned}
    \sum_{s\in \mathcal{S}}\sum_{a\in \mathcal{A}} \Big[r(s,a)(x_d(s,a) - x^*(s,a)) - \beta(x_d(s,a)-x_{tar}(s,a))^2\Big] \\ \geq -\sum_{s\in \mathcal{S}}\sum_{a\in \mathcal{A}} \beta(x^*(s,a)-x_{tar}(s,a))^2.
\end{aligned} \end{equation}
Let $\policy_d$ be the policy obtained from $x_d$ by~\eqref{eq:oc_policy}.
Then by rearranging terms and applying~\eqref{eq:revenue}, we find
\begin{equation} \begin{aligned}
     R^* - R_{\policy_d}  & = \sum_{s\in \mathcal{S}}\sum_{a\in \mathcal{A}} r(s,a)(x^*(s,a) - x_d(s,a)) \\
    & \leq  \beta \sum_{s\in \mathcal{S}}\sum_{a\in \mathcal{A}}\big[\big(x^*(s,a)-x_{tar}(s,a)\big)^2 - \big(x_d(s,a)-x_{tar}(s,a)\big)^2 \big],
    % & = \beta \sum_{s\in \mathcal{S}}\sum_{a\in \mathcal{A}} [(x^*(s,a)^2 - x_d(s,a)^2 \\& \qquad \qquad -2x_{tar}(s,a)(x^*(s,a) - x_d(s,a))].
\end{aligned} \end{equation}
and expanding gives
\begin{equation} \begin{aligned}
    &R^*- R_{\policy_d} \leq \beta \sum_{s\in \mathcal{S}}\sum_{a\in \mathcal{A}}
    \Big[(x^*(s,a)^2-2x^*(s,a)x_{tar}(s,a) - x_d(s,a)^2 +2x_d(s,a)x_{tar}(s,a)\Big].
\end{aligned} \end{equation}
We know that $\frac{(1-\gamma)^{-2}}{|\mathcal{S}||\mathcal{A}|} \leq \sum_{s \in \mathcal{S}} \sum_{a\in \mathcal{A}}x_d(s,a)^2$
%~\eqref{eq:oc_square_sum},
and $\sum_{s\in\mathcal{S}}\sum_{a\in\mathcal{A}}x_d(s,a) \leq (1-\gamma)^{-1}$ from Proposition~\ref{prop:oc_sum}.
Since $x^*$ and $x_{tar}$ are fixed, we bound the revenue loss when employing a targeted deceptive policy $\policy_d$ as
% \begin{align}\label{eq:tar_loss_bound}
%     L_{\policy_d} \leq \frac{\beta}{R^*}[& \sum_{s \in\mathcal{S}}\sum_{a\in\mathcal{A}_s} x^*(s,a)^2 -\frac{(1-\gamma)^{-2}}{|\mathcal{S}||\mathcal{A}|} \\
%     & -2\sum_{s \in\mathcal{S}}\sum_{a\in\mathcal{A}_s}x_{tar}(s,a)x^*(s,a) \\
%     & +2(1-\gamma)^{-1}\max_{s,a}x_{tar}(s,a)],
% \end{align} 

\begin{equation}\label{eq:tar_loss_bound}
\begin{aligned}
    &L_{\policy_d} \leq \frac{\beta}{R^*}\left[ \sum_{s \in\mathcal{S}}\sum_{a\in\mathcal{A}} \Big(x^*(s,a)^2 - 2x_{tar}(s,a)x^*(s,a)\Big)-\frac{(1-\gamma)^{-2}}{|\mathcal{S}||\mathcal{A}|}  +2(1-\gamma)^{-1}\max_{s\in\mathcal{S},a\in\mathcal{A}}x_{tar}(s,a)\right],
\end{aligned} 
\end{equation}
where $L_{\policy_d}$ is from~\eqref{eq:revenue_loss}.

\section{Proof of Theorem~\ref{thm:bound_equ}}\label{prf:bound_equ}

% Recall that the revenue is defined as
%     \[R_\pi = \sum_{s\in \mathcal{S}}\sum_{a\in \mathcal{A}}r(s,a)x_\pi(s,a),\]
% and the revenue loss is defined as
%     \[L_\pi = \frac{R^* - R_\pi}{R^*}.\]
% We apply a strategy similar to those for diversionary and targeted deception in order  to find a bound on the revenue loss in equivocal deception.
Denote the feasible region of Optimization Problem~\ref{op:modified_equivocal_optimization_problem} by $\mathcal{X}$. 
Then, Optimization Problem~\ref{op:modified_equivocal_optimization_problem} becomes
\begin{equation} \begin{aligned}
    & \underset{x\in \mathcal{X}}{\operatorname{maximize}} \sum_{s\in \mathcal{S}}\sum_{a\in \mathcal{A}} r(s,a)x(s,a) - \beta \Big(\sum_{s\in\mathcal{S}_{goal}}\sum_{a\in\mathcal{A}}x(s, a) -\sum_{s\in\mathcal{S}_{decoy}}\sum_{a\in\mathcal{A}}x(s, a)\Big)^2.
\end{aligned} \end{equation}
Let $x^*$ be the solution to Optimization Problem~\ref{op:dualLP_with_task_constraint}, which does not include additional terms for deception. 
Since $r(s,a)$ and~$x^*(s,a)$ are fixed values for all $s \in \mathcal{S}$ and $a \in \mathcal{A}$ and the solution to the problem remains identical after subtracting the term $\sum_{s\in \mathcal{S}}\sum_{a\in \mathcal{A}} r(s,a)x^*(s,a)$ from the objective function, we may equivalently write the problem as:
\begin{equation} \begin{aligned}
    & \underset{x\in \mathcal{X}}{\operatorname{maximize}}\sum_{s\in \mathcal{S}}\sum_{a\in \mathcal{A}} r(s,a)(x(s,a)-x^*(s,a)) - \beta \Big(\sum_{s\in\mathcal{S}_{goal}}\sum_{a\in\mathcal{A}}x(s, a) -\sum_{s\in\mathcal{S}_{decoy}}\sum_{a\in\mathcal{A}}x(s, a)\Big)^2.
\end{aligned}  \end{equation}
Notice as well that~$x^* \in \mathcal{X}$, and substituting in~$x = x^*$ yields the objective function value
$-\beta (\sum_{s\in\mathcal{S}_{goal}}\sum_{a\in\mathcal{A}}x^*(s, a) -\sum_{s\in\mathcal{S}_{decoy}}\sum_{a\in\mathcal{A}}x^*(s, a))^2$. 
The solution~$x_d$ achieves a value 
of the objective function 
that is no smaller
than that obtained by 
any other feasible solution, including $x^*$. Then 
\begin{equation} \begin{aligned}
     \sum_{s\in \mathcal{S}}\sum_{a\in \mathcal{A}} r(s,a)(x_d(s,a)-x^*(s,a))- \beta (\sum_{s\in\mathcal{S}_{goal}}\sum_{a\in\mathcal{A}}x_d(s, a) -\sum_{s\in\mathcal{S}_{decoy}}\sum_{a\in\mathcal{A}}x_d(s, a))^2 \\ 
    \geq -\beta (\sum_{s\in\mathcal{S}_{goal}}\sum_{a\in\mathcal{A}}x^*(s, a) -\sum_{s\in\mathcal{S}_{decoy}}\sum_{a\in\mathcal{A}}x^*(s, a))^2.
\end{aligned} \end{equation}
Let $\policy_d$ be the policy obtained from $x_d$ by~\eqref{eq:oc_policy}.
Then by rearranging terms and applying~\eqref{eq:revenue}, we find
\begin{equation} \begin{aligned}
    & R^* - R_{\policy_d} = \sum_{s\in \mathcal{S}}\sum_{a\in \mathcal{A}} r(s,a)(x^*(s,a) - x_d(s,a)) \\
    & \leq \beta (\sum_{s\in\mathcal{S}_{goal}}\sum_{a\in\mathcal{A}}x^*(s, a) - \sum_{s\in\mathcal{S}_{decoy}}\sum_{a\in\mathcal{A}}x^*(s, a))^2 \\ &  - \beta (\sum_{s\in\mathcal{S}_{goal}}\sum_{a\in\mathcal{A}}x_d(s, a) -\sum_{s\in\mathcal{S}_{decoy}}\sum_{a\in\mathcal{A}}x_d(s, a))^2 .
\end{aligned} \end{equation}
We have $\beta \big(\sum_{s\in\mathcal{S}_{goal}}\sum_{a\in\mathcal{A}}x_d(s, a) -\sum_{s\in\mathcal{S}_{decoy}}\sum_{a\in\mathcal{A}}x_d(s, a)\big)^2  \geq 0$. 
%since~$\mathcal{S}_{goal} \neq \mathcal{S}_{decoy}$. \Yerin{It is still possible to have same occupancy measure value}
Using this fact and the fact that $x^*$ is fixed, we bound the revenue loss when employing a targeted deceptive policy $\policy_d$ as
\begin{equation}\label{eq:equ_loss_bound}
\begin{aligned}
    & L_{\policy_d} \leq  \frac{\beta}{R^*}\left(\sum_{s\in\mathcal{S}_{goal}}\sum_{a\in\mathcal{A}}x^*(s, a) - \sum_{s\in\mathcal{S}_{decoy}}\sum_{a\in\mathcal{A}}x^*(s, a)\right)^2,
\end{aligned} 
\end{equation}
where $L_{\policy_d}$ is from~\eqref{eq:revenue_loss}.

\end{appendices}

\bibliographystyle{IEEEtran}
\bibliography{JORS/main} % if more than one, comma separated

@article{qayyum2020securing,
  title={Securing connected \& autonomous vehicles: Challenges posed by adversarial machine learning and the way forward},
  author={Qayyum, Adnan and Usama, Muhammad and Qadir, Junaid and Al-Fuqaha, Ala},
  journal={IEEE Communications Surveys \& Tutorials},
  volume={22},
  number={2},
  pages={998--1026},
  year={2020},
  publisher={IEEE}
}

@book{puterman2014markov,
  title={Markov decision processes: discrete stochastic dynamic programming},
  author={Puterman, Martin L},
  year={2014},
  publisher={John Wiley \& Sons}
}

@inproceedings{boutilier1996planning,
  title={Planning, learning and coordination in multiagent decision processes},
  author={Boutilier, Craig},
  booktitle={Theoretical Aspects of Rationality and Knowledge (TARK)},
  volume={96},
  pages={195--210},
  year={1996},
  organization={Citeseer}
}

@inproceedings{ramachandran2007bayesian,
  title={Bayesian Inverse Reinforcement Learning.},
  author={Ramachandran, Deepak and Amir, Eyal},
  booktitle={International Joint Conferences on Artificial Intelligence (IJCAI)},
  volume={7},
  pages={2586--2591},
  year={2007}
}

@article{lv2024optimal,
  title={Optimal Deceptive Strategy Synthesis for Autonomous Systems under Asymmetric Information},
  author={Lv, Peng and Li, Shaoyuan and Yin, Xiang},
  journal={IEEE Transactions on Intelligent Vehicles},
  year={2024},
  pages={6108-6121},
  publisher={IEEE}
}

@article{wulfmeier2015maximum,
  title={Maximum entropy deep inverse reinforcement learning},
  author={Wulfmeier, Markus and Ondruska, Peter and Posner, Ingmar},
  journal={arXiv preprint arXiv:1507.04888},
  year={2015}
}

@inproceedings{mceneaney2005deception,
  title={Deception in autonomous vehicle decision making in an adversarial environment},
  author={McEneaney, William and Singh, Rajdeep},
  booktitle={AIAA Guidance, Navigation, and Control Conference and Exhibit},
  pages={6152},
  year={2005}
}

@article{arkin1990autonomous,
  title={Autonomous navigation in a manufacturing environment},
  author={Arkin, Ronald C and Murphy, Robin R},
  journal={IEEE Transactions on Robotics and Automation},
  volume={6},
  number={4},
  pages={445--454},
  year={1990},
  publisher={IEEE}
}

@article{SHEEHAN2019523,
title = {Connected and autonomous vehicles: A cyber-risk classification framework},
journal = {Transportation Research Part A: Policy and Practice},
volume = {124},
pages = {523-536},
year = {2019},
author = {Barry Sheehan and Finbarr Murphy and Martin Mullins and Cian Ryan}
}

@INPROCEEDINGS{9519418,
  author={Cheu, Albert and Smith, Adam and Ullman, Jonathan},
  booktitle={2021 IEEE Symposium on Security and Privacy (SP)}, 
  title={Manipulation Attacks in Local Differential Privacy}, 
  year={2021},
  volume={},
  number={},
  pages={883-900},
  keywords={Differential privacy;Privacy;Protocols;Systematics;Electric breakdown;Emulation;Resists;Differential-Privacy;Attacks;Local-Model}}

@INPROCEEDINGS{9304015,
  author={Gohari, Parham and Hale, Matthew and Topcu, Ufuk},
  booktitle={2020 59th IEEE Conference on Decision and Control (CDC)}, 
  title={Privacy-Preserving Policy Synthesis in Markov Decision Processes}, 
  year={2020},
  volume={},
  number={},
  pages={6266-6271},
  keywords={Privacy;Differential privacy;Dynamic programming;Probabilistic logic;Markov processes;History;Heuristic algorithms}
}

@article{ying2020note,
  title={A note on optimization formulations of Markov decision processes},
  author={Ying, Lexing and Zhu, Yuhua},
  journal={arXiv preprint arXiv:2012.09417},
  year={2020}
}

@inproceedings{ziebart2008maximum,
  title={Maximum entropy inverse reinforcement learning.},
  author={Ziebart, Brian D and Maas, Andrew L and Bagnell, J Andrew and Dey, Anind K and others},
  booktitle={Association for the Advancement of Artificial Intelligence (AAAI)},
  volume={8},
  pages={1433--1438},
  year={2008}
}

@article{mhara2024cyber,
  title={Cyber attacks and threats: Study of the types of cyber attacks: Hacking, viruses, targeted attacks, and electronic espionage},
  author={Mhara, Mustafa AO Abo and Abdulrahman, Abdullah AA and Baroud, Abdulhakim AS},
  journal={ International Journal of Electrical Engineering and Sustainability},
  pages={38--47},
  year={2024}
}

@article{10.1145/989.991,
author = {Landwehr, Carl E. and Heitmeyer, Constance L. and McLean, John},
title = {A security model for military message systems},
year = {1984},
issue_date = {Aug. 1984},
publisher = {Association for Computing Machinery},
address = {New York, NY, USA},
volume = {2},
number = {3},
journal = {ACM Transactions on Computer Systems},
month = aug,
pages = {198–222},
numpages = {25},
keywords = {confinement, message systems, storage channels}
}

@INPROCEEDINGS{9152764,
  author={Angel, Sebastian and Kannan, Sampath and Ratliff, Zachary},
  booktitle={2020 IEEE Symposium on Security and Privacy (SP)}, 
  title={Private resource allocators and their applications}, 
  year={2020},
  volume={},
  number={},
  pages={372-391},
  keywords={Protocols;Privacy;Resource management;Side-channel attacks;Metadata;Bandwidth}
}

@inproceedings{abbeel2004apprenticeship,
  title={Apprenticeship learning via inverse reinforcement learning},
  author={Abbeel, Pieter and Ng, Andrew Y},
  booktitle={Proceedings of the twenty-first International Conference on Machine learning},
  pages={1},
  year={2004}
}

@inproceedings{chen2017cyber,
  title={Cyber-physical system enabled nearby traffic flow modelling for autonomous vehicles},
  author={Chen, Baiyu and Yang, Zhengyu and Huang, Siyu and Du, Xianzhi and Cui, Zhiwei and Bhimani, Janki and Xie, Xin and Mi, Ningfang},
  booktitle={2017 IEEE 36th International Performance Computing and Communications Csonference (IPCCC)},
  pages={1--6},
  year={2017},
  organization={IEEE}
}

@article{monostori2016cyber,
  title={Cyber-physical systems in manufacturing},
  author={Monostori, L{\'a}szl{\'o} and K{\'a}d{\'a}r, Botond and Bauernhansl, Thomas and Kondoh, Shinsuke and Kumara, Soundar and Reinhart, Gunther and Sauer, Olaf and Schuh, Gunther and Sihn, Wilfried and Ueda, Kenichi},
  journal={CIRP Annals},
  volume={65},
  number={2},
  pages={621--641},
  year={2016},
  publisher={Elsevier}
}

@article{guo2022cyber,
  title={Cyber-physical system-based path tracking control of autonomous vehicles under cyber-attacks},
  author={Guo, Jinghua and Li, Lubin and Wang, Jingyao and Li, Keqiang},
  journal={IEEE Transactions on Industrial Informatics},
  volume={19},
  number={5},
  pages={6624--6635},
  year={2022},
  publisher={IEEE}
}

@article{yu2016smart,
  title={Smart grids: A cyber--physical systems perspective},
  author={Yu, Xinghuo and Xue, Yusheng},
  journal={Proceedings of the IEEE},
  volume={104},
  number={5},
  pages={1058--1070},
  year={2016},
  publisher={IEEE}
}

@article{yazdani2022differentially,
  title={Differentially private {LQ} control},
  author={Yazdani, Kasra and Jones, Austin and Leahy, Kevin and Hale, Matthew},
  journal={IEEE Transactions on Automatic Control},
  year={2022},
volume={68},
  number={2},
  pages={1061--1068},
  publisher={IEEE}
}

@inproceedings{hawkins2020differentially,
  title={Differentially private formation control},
  author={Hawkins, Calvin and Hale, Matthew},
  booktitle={59th IEEE Conference on Decision and Control (CDC)},
  pages={6260--6265},
  year={2020},
}

@inproceedings{chen2023differential,
  title={Differential privacy in cooperative multiagent planning},
  author={Chen, Bo and Hawkins, Calvin and Karabag, Mustafa O and Neary, Cyrus and Hale, Matthew and Topcu, Ufuk},
  booktitle={Uncertainty in Artificial Intelligence},
  pages={347--357},
  year={2023},
  organization={PMLR}
}

@article{chen2023differentialsymbolic,
  title={Differential privacy for symbolic systems with application to Markov Chains},
  author={Chen, Bo and Leahy, Kevin and Jones, Austin and Hale, Matthew},
  journal={Automatica},
  volume={152},
  pages={110908},
  year={2023},
  publisher={Elsevier}
}

@inproceedings{benvenuti2023differentially,
  title={Differentially Private Reward Functions for Markov Decision Processes},
  author={Benvenuti, Alexander and Hawkins, Calvin and Fallin, Brandon and Chen, Bo and Bialy, Brendan and Dennis, Miriam and Hale, Matthew},
  booktitle={2024 IEEE Conference on Control Technology and Applications (CCTA)},
  pages={631--636},
  year={2024},
  organization={IEEE}
}

@inproceedings{karabag2019least,
  title={Least inferable policies for Markov decision processes},
  author={Karabag, Mustafa O and Ornik, Melkior and Topcu, Ufuk},
  booktitle={2019 American Control Conference (ACC)},
  pages={1224--1231},
  year={2019},
  organization={IEEE}
}

@article{zheng2019markov,
  title={Markov decision process to enforce moving target defence policies},
  author={Zheng, Jianjun and Namin, Akbar Siami},
  journal={arXiv preprint arXiv:1905.09222},
  year={2019}
}

@INPROCEEDINGS{9833672,
  author={Jin, Jiankai and McMurtry, Eleanor and Rubinstein, Benjamin I. P. and Ohrimenko, Olga},
  booktitle={2022 IEEE Symposium on Security and Privacy (SP)}, 
  title={Are We There Yet? Timing and Floating-Point Attacks on Differential Privacy Systems}, 
  year={2022},
  volume={},
  number={},
  pages={473-488},
  keywords={Differential privacy;Privacy;Sensitivity;Stochastic processes;Observers;Libraries;Timing;Differential-Privacy;Timing-Side-Channel;Floating-Point-Representation;Gaussian-Mechanisms;Laplace-Mechanisms}}

@article{karabag2021deception,
  title={Deception in supervisory control},
  author={Karabag, Mustafa O and Ornik, Melkior and Topcu, Ufuk},
  journal={IEEE Transactions on Automatic Control},
  volume={67},
  number={2},
  pages={738--753},
  year={2021},
  publisher={IEEE}
}

@article{savas2019entropy,
  title={Entropy maximization for Markov decision processes under temporal logic constraints},
  author={Savas, Yagiz and Ornik, Melkior and Cubuktepe, Murat and Karabag, Mustafa O and Topcu, Ufuk},
  journal={IEEE Transactions on Automatic Control},
  volume={65},
  number={4},
  pages={1552--1567},
  year={2019},
  publisher={IEEE}
}

@article{karabag2022exploiting,
  title={Exploiting partial observability for optimal deception},
  author={Karabag, Mustafa O and Ornik, Melkior and Topcu, Ufuk},
  journal={IEEE Transactions on Automatic Control},
  volume={68},
  number={7},
  pages={4443--4450},
  year={2022},
  publisher={IEEE}
}

@inproceedings{abdulhai2024defining,
  title={Defining Deception in Decision Making},
  author={Abdulhai, Marwa and Carroll, Micah and Svegliato, Justin and Dragan, Anca and Levine, Sergey},
  booktitle={Proceedings of the 23rd International Conference on Autonomous Agents and Multiagent Systems},
  pages={2111--2113},
  year={2024}
}

@inproceedings{kim2024defining,
  title={Defining and Measuring Deception in Sequential Decision Systems: Application to Network Defense},
  author={Kim, Yerin and Benvenuti, Alexander and Chen, Bo and Karabag, Mustafa and Kulkarni, Abhishek and Bastian, Nathaniel D and Topcu, Ufuk and Hale, Matthew},
  booktitle={MILCOM 2024-2024 IEEE Military Communications Conference (MILCOM)},
  pages={1--6},
  year={2024},
  organization={IEEE}
}

@INPROCEEDINGS{6614155,
  author={Jia, Quan and Sun, Kun and Stavrou, Angelos},
  booktitle={2013 22nd International Conference on Computer Communication and Networks (ICCCN)}, 
  title={MOTAG: Moving Target Defense against Internet Denial of Service Attacks}, 
  year={2013},
  volume={},
  number={},
  pages={1-9},
  keywords={Servers;Authentication;Nickel;Computer crime;IP networks;Mathematical model;Equations},
  doi={10.1109/ICCCN.2013.6614155}}

@INPROCEEDINGS{8424662,
  author={M. Ghourab, Esraa and Samir, Effat and Azab, Mohamed and Eltoweissy, Mohamed},
  booktitle={2018 IEEE Security and Privacy Workshops (SPW)}, 
  title={Diversity-Based Moving-Target Defense for Secure Wireless Vehicular Communications}, 
  year={2018},
  volume={},
  number={},
  pages={287-292},
  keywords={Roads;Security;Vehicle dynamics;Reliability;Channel capacity;Wireless communication;Numerical models;Vehicle to vehicle communication;Moving target defense;Diversification;Nagel-Schreckenberg rules},
  doi={10.1109/SPW.2018.00046}}

@INPROCEEDINGS{6175633,
  author={Groat, Stephen and Dunlop, Matthew and Urbanksi, William and Marchany, Randy and Tront, Joseph},
  booktitle={2012 IEEE PES Innovative Smart Grid Technologies (ISGT)}, 
  title={Using an IPv6 moving target defense to protect the Smart Grid}, 
  year={2012},
  volume={},
  number={},
  pages={1-7},
  keywords={Smart grids;Embedded systems;Security;Protocols;Internet;Privacy;Monitoring},
  doi={10.1109/ISGT.2012.6175633}}

@article{WANG201410,
title = {A moving target DDoS defense mechanism},
journal = {Computer Communications},
volume = {46},
pages = {10-21},
year = {2014},
issn = {0140-3664},
doi = {https://doi.org/10.1016/j.comcom.2014.03.009},
author = {Huangxin Wang and Quan Jia and Dan Fleck and Walter Powell and Fei Li and Angelos Stavrou},
keywords = {DDoS, Moving target defense, Secret proxy, Insider, Shuffling},
abstract = {In this paper, we introduce a moving target defense mechanism that defends authenticated clients against Internet service DDoS attacks. Our mechanism employs a group of dynamic, hidden proxies to relay traffic between authenticated clients and servers. By continuously replacing attacked proxies with backup proxies and reassigning (shuffling) the attacked clients onto the new proxies, innocent clients are segregated from malicious insiders through a series of shuffles. To accelerate the process of insider segregation, we designed an efficient greedy algorithm which is proven to have near optimal empirical performance. In addition, the insider quarantine capability of this greedy algorithm is studied and quantified to enable defenders to estimate the resource required to defend against DDoS attacks and meet defined QoS levels under various attack scenarios. Simulations were then performed which confirmed the theoretical results and showed that our mechanism is effective in mitigating the effects of a DDoS attack. The simulations also demonstrated that the overhead introduced by the shuffling procedure is low.}
}

@ARTICLE{benvenuti2024guaranteed,

  author={Benvenuti, Alexander and Bialy, Brendan and Dennis, Miriam and Hale, Matthew},

  journal={IEEE Control Systems Letters}, 

  title={Guaranteed Feasibility in Differentially Private Linearly Constrained Convex Optimization}, 

  year={2024},

 volume={8},

  number={},

  pages={2745-2750}}

@book{boyd2004convex,
  title={Convex optimization},
  author={Boyd, Stephen and Vandenberghe, Lieven},
  publisher={Cambridge University Press},
  year={2004}
}

\end{document}